\DeclarePairedDelimiter\ceil{\lceil}{\rceil}
\DeclarePairedDelimiter\floor{\lfloor}{\rfloor}
\newtheorem{remark}{Remark}
\newtheorem{theorem}{Theorem}[section]
\newtheorem{lemma}[theorem]{Lemma}
\newtheorem{assump}{Assumption}
\newtheorem{cond}{Condition}
\newtheorem{defi}[theorem]{Definition}
\numberwithin{equation}{section}
\providecommand{\customgenericname}{}
\newcommand{\newcustomtheorem}[2]{%
  \newenvironment{#1}[1]
  {%
   \renewcommand\customgenericname{#2}%
   \renewcommand\theinnercustomgeneric{##1}%
   \innercustomgeneric
  }
  {\endinnercustomgeneric}
}
\def\hf{\hat{f}}
\def\cF{\mathcal{F}}
\def\hg{\hat{g}}
\def\he{\hat{e}}
\def\cG{\mathcal{G}}
\def\cE{\mathcal{E}}
\def\cH{\mathcal{H}}
\def\Rbb{\mathbb{R}}
\def\Ebb{\mathbb{E}}
\def\cX{\mathcal{X}}
\def\hbnu{\hat{\boldsymbol{\nu}}}
\def\hbmu{\hat{\boldsymbol{\mu}}}
\title{Non-Asymptotic Error Bounds for \\ Bidirectional GANs}
\author{%
Shiao Liu\\
  Department of Statistics and Actuarial Science,
  University of Iowa\\
  Iowa City, IA 52242, USA \\
  \texttt{shiao-liu@uiowa.edu}
  \And
Yunfei Yang \thanks{Corresponding authors} \\
Department of Mathematics,
The Hong Kong University of Science and Technology\\
 Clear Water Bay,
 Hong Kong, China\\
 \texttt{yyangdc@connect.ust.hk}\\
\And Jian Huang {$^*$}\\
  Department of Statistics and Actuarial Science,
  University of Iowa\\
  Iowa City, IA 52242, USA \\
  \texttt{jian-huang@uiowa.edu} \\
\And
  Yuling Jiao{$^*$}\\
  School of Mathematics and Statistics,
				Wuhan University\\
Wuhan, Hubei, China 430072\\
\texttt{yulingjiaomath@whu.edu.cn} \\
  \And
Yang Wang \\
Department of Mathematics,
The Hong Kong University of Science and Technology\\
 Clear Water Bay,
 Hong Kong, China\\
 \texttt{yangwang@ust.hk}\\
}
\begin{document}

\maketitle

\begin{abstract}
We derive nearly sharp bounds for the bidirectional GAN (BiGAN) estimation error under the Dudley distance between the latent joint distribution and the data joint distribution with appropriately specified  architecture of the neural networks used in the model. To the best of our knowledge, this is the first theoretical guarantee for the bidirectional GAN learning approach. An appealing feature of our results is that they do not assume the reference and the data distributions to have the same dimensions or these distributions to have bounded support. These assumptions are commonly assumed in the existing convergence analysis of the unidirectional GANs but may not be satisfied in practice. Our results are also applicable to the Wasserstein bidirectional GAN if the target distribution is assumed to have a bounded support. To prove these results, we construct neural network functions that push forward an empirical distribution to another arbitrary empirical distribution on a possibly different-dimensional space. We also develop a novel decomposition of the integral probability metric for the error analysis of bidirectional GANs. These basic theoretical results are of independent interest and can be applied to other related learning problems.
\end{abstract}

\section{Introduction}\label{sec1}

Generative adversarial networks (GAN) \citep{goodfellow} is an important approach to implicitly learning and sampling from high-dimensional complex distributions.
GANs have been shown to achieve impressive performance in many machine learning tasks \citep{radford2016unsupervised, reed16, zhu17,karras2018progressive,karras2019stylebased, brock2019large}.
%
Several recent studies have generalized GANs 
to bidirectional generative learning, which learns an encoder mapping the data distribution to the reference distribution simultaneously together with the generator doing reversely.
These studies include the adversarial autoencoder (AAE) \citep{makhzani2015adversarial}, bidirectional GAN  (BiGAN) \citep{donahue2016adversarial}, adversarially learned inference (ALI) \citep{dumoulin2016adversarially}, and bidirectional generative modeling using adversarial gradient estimation (AGES) \citep{shen2020bidirectional}.
A common feature of these methods is that they generalize the basic adversarial training framework of the original GAN from unidirectional to bidirectional. \cite{dumoulin2016adversarially} showed that BiGANs make use of the joint distribution of data and latent representations, which can better capture the information of data than the vanilla GANs.
Comparing with the unidirectional GANs, the joint distribution matching in the training of bidirectional GANs alleviates mode dropping and encourages cycle consistency  \citep{shen2020bidirectional}.

Several elegant and stimulating papers have analyzed the theoretical properties of unidirectional GANs.
\citet{arora} considered the generalization error of GANs under the neural net distance.
\citet{zhang2018on} improved the generalization error bound in \citet{arora}.
\citet{liang2020} studied the minimax optimal rates for learning distributions with empirical samples under Sobolev evaluation class and density class. The minimax rate is $O(n^{-{1}/{2}}\vee n^{-{\alpha+\beta}/{(2\alpha+\beta)}})$, where $\alpha$ and $\beta$ are the regularity parameters for Sobolev density and evaluation class, respectively.
 \citet{bai2018} analyzed the estimation
error of GANs under the Wasserstein distance for a special class of distributions implemented by
a generator, while the discriminator is designed to guarantee zero bias.
 \citet{zhao} studies the convergence properties of GANs when both the evaluation class and the target density class are H\"older classes and derived $O(n^{-{\beta}/{(2\beta+d)}}\log^2 n)$ bound, where $d$ is the dimension of the data distribution and  $\alpha$ and $\beta$ are the regularity parameters for H\"older density and evaluation class, respectively.
While impressive progresses have been made on the theoretical understanding of GANs,
there are still some drawbacks in the existing results. For example,
\begin{enumerate}[(a)]
\setlength\itemsep{-0.03 cm}
\item The reference distribution and the target data distribution are assumed to have the same dimension,  which is not the actual setting for GAN training.
\item  The reference and the target data distributions are assumed to be supported on bounded sets.

\item The prefactors in the convergence rates may
depend on the dimension $d$ of the data distribution exponentially.
\end{enumerate}
In practice, GANs are usually trained using a reference distributions with a lower dimension  than that of the target data distribution. Indeed, an important strength of GANs is that they can model low-dimensional latent structures via using a low-dimensional reference distribution.
The bounded support assumption excludes some commonly used Gaussian  distributions as the reference.
Therefore, strictly speaking, the existing  convergence analysis results do not apply to what have been done in practice. In addition,  there has been no theoretical analysis of  bidirectional  GANs
in the literature.

\subsection{Contributions} 
We derive nearly sharp non-asymptotic bounds for the GAN estimation error under the Dudley distance between the reference joint distribution and the data joint distribution.
To the best of our knowledge, this is the first result providing theoretical guarantees for bidirectional GAN estimation error rate.
We do not assume that the reference and the target data distributions have the same dimension
or these distributions have bounded support.
 Also, our results are applicable to the Wasserstein distance if the target data distribution is assumed to have a bounded support.

The main novel aspects of our work are as follows.
\begin{enumerate}[(1)]
\setlength\itemsep{-0.05 cm}
\item We allow the dimension of the reference distribution  to be different from the dimension of the target distribution, in particular, it can be much lower than that of the target distribution.


 \item We allow unbounded support for the reference distribution and the target distribution under mild  conditions on the tail probabilities of the target distribution.

 \item We explicitly establish that the prefactors in the error bounds depend on  the square root of the dimension of the target distribution. This is a significant improvement over the
     exponential dependence on $d$ in the existing works. 



\end{enumerate}

{\color{black}
Moreover,  we develop a novel decomposition of integral probability metric for the error analysis of bidirectional GANs.
We also show that the pushforward distribution of an empirical distribution based on neural networks can perfectly approximate another arbitrary empirical distribution as long as the number of discrete points are the same.
}

{\color{black}
\textbf{Notation } We use $\sigma$ to denote the ReLU activation function in neural networks, which is $\sigma(x)=\max\{x,0\}, x \in \mathbb{R}$. We use $I$ to denote the identity map. Without further indication, $\|\cdot\|$ represents the $L_2$ norm. For any function $g$, let $\|g\|_{\infty}=\sup_{x}\|g(x)\|$.
We use notation $O(\cdot)$ and $\tilde{O}(\cdot)$ to express the order of function slightly differently, where $O(\cdot)$ omits the universal constant independent of $d$ while $\tilde{O}(\cdot)$ omits the constant depending on $d$. We use $B_2^{d}(a)$ to denote the $L_2$ ball in $\mathbb{R}^d$ with center at $\mathbf{0}$ and radius $a$.
Let $g_{\#}\nu$ be the pushforward distribution of $\nu$ by function $g$ in the sense that $g_{\#}\nu(A)=\nu(g^{-1}(A))$ for any measurable set $A$. We use $\hat{\Ebb}$ to denote taking expectation with respect to the empirical distribution.
}

\section{Bidirectional generative learning}
\label{sect2}

We describe the setup of the bidirectional GAN estimation problem and present the assumptions we need in our analysis.

\subsection{Bidirectional GAN estimators}\label{sec22}

Let $\mu$ be the target data distribution supported on $\mathbb{R}^d$ for $d \ge 1.$
Let $\nu$ be a reference distribution which is easy to sample from.
We first consider the case when $\nu$ is supported on $\mathbb{R}$,
and then extend it to
$\mathbb{R}^k$, where $k \ge 1$ can be different from $d$. Usually, $k \ll d$ in
practical machine learning tasks such as image generation.
The goal is to learn functions $g: \Rbb \to \Rbb^d$ and $e: \Rbb^d \to \Rbb$ such that $\tilde{g}_{\# }\nu =\tilde{e}_{\# }\mu$,
where $\tilde{g}:=(g,I)$ and $\tilde{e}:=(I,e)$, $\tilde{g}_{\# }\nu$ is the pushforward distribution of $\nu$ under $\tilde{g}$ and $\tilde{e}_{\# }\mu$ is the pushforward distribution of $\mu$ under $\tilde{e}$. We call $\tilde{g}_{\# }\nu$ the joint latent distribution or joint reference distribution and $\tilde{e}_{\# }\mu$ the joint data distribution or joint target distribution.
At the population level,  the bidirectional GAN solves the minimax problem:
\[
(g^*, e^*, f^*) \in \arg\min _{g\in \mathcal{G}, e\in \mathcal{E}}\max_{f\in\mathcal{F}} {\mathbb{E}}_{Z\sim\nu}[f(g(Z),Z)]-{\mathbb{E}}_{x\sim\mu}[f(X, e(X))],
\]
where $\mathcal{G}, \mathcal{E}, \mathcal{F}$ are referred to as the generator class, the encoder class, and the discriminator class, respectively.
Suppose we have two independent random samples $Z_1, \ldots, Z_n \stackrel{i.i.d.}{\sim}\nu$ and $X_1, \ldots, X_n\stackrel{i.i.d.}{\sim}\mu$.
At the sample level, the bidirectional GAN solves the empirical version of the above minimax problem:
\begin{align}
    (\hg_{\theta} , \he_{\varphi} , \hf_{\omega})
    &
    = \arg\min _{g_{\theta}\in \mathcal{G}_{NN}, e_{\varphi}\in\mathcal{E}_{NN}}\max_{f_{\omega}\in\mathcal{F}_{NN}} \frac{1}{n}\sum_{i=1}^{n}f_{\omega}(g_{\theta}(Z_i), Z_i)
    -\frac{1}{n}\sum_{j=1}^{n}f_{\omega}(X_j, e_{\varphi}(X_j)) ,
     \label{eq:2.1}
\end{align}
where $\mathcal{G}_{NN}$ and $\mathcal{E}_{NN}$ are two classes of neural networks approximating the generator class $\mathcal{G}$ and the encoder class $\mathcal{E}$ respectively, and $\mathcal{F}_{NN}$ is a class of neural networks approximating the discriminator class $\mathcal{F}$.

\subsection{Assumptions}\label{sec2}
We assume the target $\mu$ and the reference $\nu$ satisfy the following assumptions.
\begin{assump}[Subexponential tail]\label{asp1}
For a large $n$, the target distribution $\mu$ on $\mathbb{R}^d$ and the reference distribtuion $\nu$ on $\Rbb$ satisfies the first moment tail condition for some $\delta>0$,
\[
   \max\{\mathbb{E}_{\nu} \|Z\| \mathbbm{1}_{\{\|Z\|>\log n\}}, \mathbb{E}_{\mu} \|X\| \mathbbm{1}_{\{\|X\|>\log n\}}\} = O(n^{-\frac{(\log n)^{\delta}}{d}}).
\]
\end{assump}

\begin{assump}[Absolute continuity] \label{asp2}
Both the target distribution $\mu$ on $\Rbb^d$ and the reference distribution $\nu$ on $\mathbb{R}$ are absolutely continuous with respect to the Lebesgue measure $\lambda.$
\end{assump}

Assumption \ref{asp1} is a technical condition for dealing with the case when  $\mu$ and $\nu$ are supported on $\mathbb{R}^d$ and $\mathbb{R}$ instead of compact subsets. For distributions with bounded supports, this assumption is automatically satisfied.
Here the factor $(\log n)^{\delta}$ ensures that the tails of $\mu$ and $\nu$ are sub-exponential, and it can be easily satisfied if the distributions are sub-gaussian.
For the reference distribution, Assumption \ref{asp1} and \ref{asp2}  can be easily satisfied by specifying $\nu$ as some common distribution with easy-to-sample density such as   Gaussian or uniform, which is usually done in the applications of GANs.
For the target distribution, Assumption \ref{asp1} and \ref{asp2} specifies the  type of distributions that are learnable by bidirectional GAN with our theoretical guarantees. Note that Assumption \ref{asp1} is also necessary in our proof for bounding the generator and encoder approximation error in the sense that the results will not hold if we replace $(\log n)^{\delta}$ with 1. Assumption \ref{asp2} is also necessary for Theorem \ref{lma6} in mapping between empirical samples, which is essential in bounding generator and encoder approximation error.

\subsection{Generator, encoder and discriminator classes}\label{sec21}
Let $\mathcal{F}_{NN}:=\mathcal{NN}(W_1,L_1)$ be the discriminator class consisting of the feedforward ReLU neural networks $f_{\omega}: \mathbb{R}^{d+1}\mapsto \mathbb{R}$ with width $W_1$ and depth $L_1$.  Similarly, let $\mathcal{G}_{NN}:=\mathcal{NN}(W_2, L_2)$ be the generator class consisting of the feedforward ReLU neural networks
$g_{\theta}: \mathbb{R}\mapsto \mathbb{R}^d $ with width $W_2$ and depth  $L_2$, and $\mathcal{E}_{NN}:=\mathcal{NN}(W_3, L_3)$ the encoder class consisting of the feedforward ReLU neural networks
$e_{\varphi}: \mathbb{R}^d\mapsto \mathbb{R} $ with width $W_3$ and depth  $L_3$.

The functions $f_{\omega}\in \mathcal{F}_{NN}$ have the following form:
\begin{align*}
    f_{\omega}(x)=A_{L_1}\cdot \sigma(A_{L_1-1}\cdots \sigma(A_1 x+b_1)\cdots +b_{L_1-1})+b_{L_1}
\end{align*}
where $A_i$ are the weight matrices with number of rows and columns no larger than the width $W_1$, $b_i$ are the bias vectors with compatible dimensions, and $\sigma$ is the ReLU activation function $\sigma(x)=x\vee 0$. Similarly, functions $g_{\theta}\in \mathcal{G}_{NN}$ and $e_{\varphi}\in \mathcal{E}_{NN}$ have the following form:
\begin{align*}
    g_{\theta}(x)&=A'_{L_2}\cdot \sigma(A'_{L_2-1}\cdots \sigma(A'_1 x+b'_1)\cdots +b'_{L_2-1})+b'_{L_2}\\
    e_{\varphi}(x)&=A''_{L_3}\cdot \sigma(A''_{L_3-1}\cdots \sigma(A''_1 x+b''_1)\cdots +b''_{L_3-1})+b''_{L_3}
\end{align*}
where $A_i'$ and $A_i''$ are the weight matrices with number of rows and columns no larger than $W_2$ and $W_3$, respectively,
and  $b_i'$ and $b_i''$ are the bias vectors with compatible dimensions.

We impose the following conditions on $\mathcal{G}_{NN}$, $\mathcal{E}_{NN}$, and $\mathcal{F}_{NN}$.
\begin{cond}\label{cond:1} For any $g_{\theta}\in\mathcal{G}_{NN}$ and $e_{\varphi}\in\mathcal{E}_{NN}$, we have
$
\max\{\|g_{\theta}\|_{\infty},\|e_{\varphi}\|_{\infty}\}\leq \log n.
$
\end{cond}
Condition \ref{cond:1} on $\mathcal{G}_{NN}$ can be easily satisfied 
by adding an additional clipping layer $\ell$ after the original output layer, with $c_{n,d}\equiv  {(\log n)}/{\sqrt{d}}$,
\begin{align}
\label{clip1}
    \ell(a)=a\wedge c_{n,d} \vee (-c_{n,d})=\sigma(a+c_{n,d})-\sigma(a-c_{n,d})-c_{n,d}.
\end{align}
We truncate the output of $\|g_{\theta}\|$ to an increasing interval $[-\log n,\log n]$ to include the whole $\mathbb{R}^d$ support for the evaluation function class. Condition \ref{cond:1} on $\mathcal{E}_{NN}$ can be satisfied in the same manner. This condition is technically necessary in our proof (see appendix).
%

\section{Non-asymptotic error bounds} 
\label{sect3}
We characterize the bidirectional GAN solutions based on minimizing
the  
integral probability metric (IPM,  \citet{muller}) between two distributions $\mu$ and $\nu$ with respect to a symmetric evaluation function class $\mathcal{F}$, defined by
\begin{align}
\label{ipmF}
    d_{\mathcal{F}}(\mu,\nu)=\sup_{f\in \mathcal{F}}[\mathbb{E}_{\mu}f-\mathbb{E}_{ \nu}f].
\end{align}
By specifying the evaluation function class $\mathcal{F}$ differently, we can 
obtain many commonly-used metrics \citep{liu2017}. Here we focus on the following two
\begin{itemize}
\setlength\itemsep{-0.03 cm}
   \item $\mathcal{F}= $ bounded Lipschitz function class $: d_{\mathcal{F}}=d_{BL}$, (bounded Lipschitz (or Dudley) metric: metrizing weak convergence,  \citet{dudley2018real}),
    \item $\mathcal{F}= $ Lipschitz function class $: d_{\mathcal{F}}=W_1$ (Wasserstein GAN, \citet{arjovsky17}).
\end{itemize}

We  consider the estimation error under the Dudley metric $d_{BL}$.
Note that in the case when $\mu$ and $\nu$  have bounded supports, the Dudley metric $d_{BL}$ is equivalent to 
the 1-Wasserstein metric $W_1$. Therefore, under the bounded support condition for $\mu$ and $\nu$,
all our convergence results also hold under the Wasserstein distance $W_1$. Even if the support of $\mu$ and $\nu$ are unbounded, we can still apply the result of \citet{lu2020universal} to avoid empirical process theory and obtain an stochastic error bound under the Wasserstein distance $W_1$.
However, the result of \citet{lu2020universal} requires sub-gaussianity to obtain the $\sqrt{d}$ prefactor. In order to make it more general, we use the empirical processes theory to get the explicit prefactor.  Also, the discriminator approximation error will be unbounded if we consider the Wasserstein distance $W_1$. Hence, we can only consider $d_{BL}$ for the unbounded support case.

The bidirectional GAN solution $(\hg_{\theta},\he_{\varphi})$ in \eqref{eq:2.1} also minimizes the distance between $(\tilde{g}_{\theta})_{\#}\hat{\nu}_n$ and $(\tilde{e}_{\varphi})_{\#}\hat{\mu}_n$ under $d_{\mathcal{F}_{NN}}$
\begin{align*}
\underset{g_{\theta}\in\mathcal{G}_{NN}, e_{\varphi}\in\mathcal{E}_{NN}}{\min}d_{\mathcal{F}_{NN}}
((\tilde{g}_{\theta})_{\#}\hat{\nu}_n,(\tilde{e}_{\varphi})_{\#}\hat{\mu}_n).
\end{align*}
 However, even if two distributions are close with respect to $d_{\mathcal{F}_{NN}}$, there is no automatic guarantee that they will still be close
 under other 
 metrics, for example,  the Dudley or the Wasserstein distance ~ \citep{arora}. Therefore, it is natural
to ask the question:
\begin{itemize}
    \item How close are the two bidirectional GAN estimators $\hbnu:=(\hg_{\theta}, I)_{\#}\nu$ and $\hbmu:=(I, \he_{\varphi})_{\#}\mu$ under some other stronger 
        metrics?
\end{itemize}
We consider the IPM with the uniformly bounded 1-Lipschitz function class on $\mathbb{R}^{d+1}$, as the evaluation class, which is defined as, for some finite $B > 0$,
\begin{align}
\mathcal{F}^1:=\big\{f: \  & \mathbb{R}^{d+1}\mapsto \mathbb{R} 
\big| \  |f(x)-f(y)|\leq \|x-y\|,
x,y\in \mathbb{R}^{d+1}
\text{ and }\|f\|_{\infty}\leq B \big\}
\label{eq:2.2}
\end{align}

In Theorem \ref{cor1}, we consider the bounded support case where $d_{\cF}=W_1$; In Theorem \ref{thm1}, we extend the result to the unbounded support case; In Theorem \ref{cor2}, we extend the result to the case where the dimension of the reference distribution is arbitrary.

We first present a result when $\mu$ is supported on a compact subset
$[-M, M]^d\subset\mathbb{R}^d$  and $\nu$ is supported on
$[-M, M]\subset\mathbb{R}$ for a finite $M > 0$.
\begin{theorem}
\label{cor1}
Suppose that the target $\mu$ is supported on $[-M, M]^d\subset \mathbb{R}^d$ and the reference $\nu$ is supported on
$[-M, M]\subset\mathbb{R}$ for a finite $M > 0$, and Assumption~\ref{asp2} holds.
{\color{black} Let the outputs of $g_{\theta}$ and $e_{\varphi}$ be within $[-M, M]^d$ and $[-M, M]$ for
$g_{\theta} \in \mathcal{G}_{NN}$ and $e_{\varphi} \in \mathcal{E}_{NN}$, respectively. 
}
 By specifying the three network structures as $W_1 L_1\ge \ceil*{\sqrt{n}}$, $W_2^2 L_2= C_1 d n $, and $W_3^2 L_3= C_2  n $ for some constants $12\leq C_1, C_2\leq 384$ and properly choosing parameters, we have
\begin{align*}
\mathbb{E}d_{\mathcal{F}^1}(\hbnu,\hbmu)\leq C_0 \sqrt{d} n^{-\frac{1}{d+1}}(\log n)^{\frac{1}{d+1}},
\end{align*}
where $C_0>0$ is a constant independent of $d$ and $n$.
\end{theorem}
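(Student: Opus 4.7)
The plan is to bound $d_{\mathcal{F}^1}(\hat{\boldsymbol{\nu}}, \hat{\boldsymbol{\mu}})$ via an oracle-type decomposition into three pieces: a discriminator approximation error (how well $\mathcal{F}_{NN}$ approximates the target evaluation class $\mathcal{F}^1$), a generator/encoder approximation error (the value of the empirical minimax problem at its minimum), and a statistical error (the deviation of the relevant empirical processes from their population counterparts). Concretely, for any $f \in \mathcal{F}^1$ I would approximate it by some $f_\omega \in \mathcal{F}_{NN}$, add and subtract expectations under the pushforwards of $\hat{\nu}_n$ and $\hat{\mu}_n$, and take $\sup/\inf$ to obtain
\begin{align*}
d_{\mathcal{F}^1}(\hat{\boldsymbol{\nu}},\hat{\boldsymbol{\mu}})
\leq \; & 2\,\sup_{f\in\mathcal{F}^1}\inf_{f_\omega\in\mathcal{F}_{NN}}\|f-f_\omega\|_\infty
+ d_{\mathcal{F}_{NN}}\!\bigl((\tilde{\hat{g}}_\theta)_\#\hat{\nu}_n,(\tilde{\hat{e}}_\varphi)_\#\hat{\mu}_n\bigr) \\
& + \;d_{\mathcal{F}^1}\!\bigl((\tilde{\hat{g}}_\theta)_\#\nu,(\tilde{\hat{g}}_\theta)_\#\hat{\nu}_n\bigr)
+ d_{\mathcal{F}^1}\!\bigl((\tilde{\hat{e}}_\varphi)_\#\hat{\mu}_n,(\tilde{\hat{e}}_\varphi)_\#\mu\bigr).
\end{align*}

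For the discriminator approximation term I would invoke a ReLU approximation rate for 1-Lipschitz functions on $[-M,M]^{d+1}$. With $W_1 L_1 \ge \lceil\sqrt{n}\rceil$, the classical width/depth-product rate yields an error of order $\sqrt{d}\,n^{-1/(d+1)}$ (up to logarithmic factors), which is where the $\sqrt{d}$ prefactor first appears. For the middle term I would use the optimality of $(\hat{g}_\theta,\hat{e}_\varphi)$ in the empirical minimax problem: for any admissible pair $(g,e)\in\mathcal{G}_{NN}\times\mathcal{E}_{NN}$ we have $d_{\mathcal{F}_{NN}}((\tilde{\hat{g}}_\theta)_\#\hat{\nu}_n,(\tilde{\hat{e}}_\varphi)_\#\hat{\mu}_n) \le d_{\mathcal{F}_{NN}}((\tilde{g})_\#\hat{\nu}_n,(\tilde{e})_\#\hat{\mu}_n)$. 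Using Theorem~\ref{lma6} (mapping between empirical samples, which needs Assumption~\ref{asp2}), with $W_2^2 L_2 = C_1 dn$ and $W_3^2 L_3 = C_2 n$ one can build $g\in\mathcal{G}_{NN}$ and $e\in\mathcal{E}_{NN}$ so that, after an appropriate matching (a permutation of the $X_j$'s against the $Z_i$'s), the two empirical joint distributions coincide exactly, making this term vanish.

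For the two statistical terms I would apply standard empirical process theory to the pullback function classes $\{z\mapsto f(g(z),z):f\in\mathcal{F}^1\}$ and $\{x\mapsto f(x,e(x)):f\in\mathcal{F}^1\}$ on the compact cubes, which are uniformly bounded by $B$ and Lipschitz in a controlled way. Dudley's entropy integral (or a direct covering-number bound for 1-Lipschitz functions on $[-M,M]^{d+1}$) gives each of these terms an expected bound of order $\sqrt{d}\,n^{-1/(d+1)}$. Finally I would collect the three contributions: the discriminator approximation rate $\sqrt{d}\,n^{-1/(d+1)}(\log n)^{1/(d+1)}$, the zero generator/encoder approximation error from the exact sample-matching construction, and the statistical rate $\sqrt{d}\,n^{-1/(d+1)}$; all three scale at the same order, yielding the claimed bound $C_0\sqrt{d}\,n^{-1/(d+1)}(\log n)^{1/(d+1)}$.

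The main obstacle is the generator/encoder step: showing that ReLU networks of the prescribed size can \emph{exactly} push $\hat{\nu}_n$ to an arbitrary empirical distribution (e.g.\ the empirical marginal of $(X_j,e(X_j))$) while simultaneously allowing an encoder of matching size to realize the reverse coupling, so that the joint empirical distributions agree. This is precisely where the width$^2\times$depth budgets $C_1 dn$ and $C_2 n$ are consumed, and it is also where Assumption~\ref{asp2} (distinct sample points almost surely) is essential. Keeping the dependence on $d$ linear in $\sqrt{d}$ throughout the three bounds, rather than exponential, is the secondary technical point that must be tracked carefully at each approximation/entropy estimate.
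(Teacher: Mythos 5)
Your proposed four-term decomposition is the paper's Lemma~\ref{lma4} verbatim, and you bound each piece exactly as the paper does: Shen et al.'s ReLU approximation rate for the discriminator term, Theorem~\ref{lma6}'s empirical-sample matching to make the generator/encoder term vanish, and a covering-number bound for the 1-Lipschitz class on the compact cube fed into an entropy integral for the two statistical terms. Two details worth making explicit so the argument actually closes: since $\log\mathcal{N}(\epsilon,\mathcal{F}^1,\|\cdot\|_\infty)\asymp\epsilon^{-(d+1)}$, the classical Dudley integral diverges and you must use the truncated (refined) form of Lemma~\ref{lma7} with a lower cutoff $\delta\asymp n^{-1/(d+1)}(\log n)^{1/(d+1)}$ — it is this balancing in $\mathcal{E}_3,\mathcal{E}_4$, not the discriminator approximation, that contributes the $(\log n)^{1/(d+1)}$ factor in the bounded-support case; and because $\hat g_\theta,\hat e_\varphi$ depend on the data, symmetrization in $\mathcal{E}_3,\mathcal{E}_4$ requires the ghost-sample device recorded after Lemma~\ref{lma4}.
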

The prefactor $C_0\sqrt{d}$ in the error bound depends on $d^{1/2}$ linearly. This is different from the
existing works where the dependence of the prefactor on $d$ is either not clearly described or is exponential.
In high-dimensional settings with large $d$, this makes a substantial difference in the quality of the error bounds. These remarks apply to all the results stated below.

The next theorem deals with the case of unbounded support.

\begin{theorem}
\label{thm1}
Suppose Assumption~\ref{asp1} and~\ref{asp2} hold, and Condition \ref{cond:1} is satisfied. By specifying the structures of the three network classes as $W_1 L_1\ge\ceil*{\sqrt{n}}$, $W_2^2 L_2= C_1 d n $, and $W_3^2 L_3= C_2  n $ for some constants $12\leq C_1, C_2\leq 384$ and properly choosing parameters, we have
\begin{align*}
    \mathbb{E}d_{\mathcal{F}^1}(\hbnu,\hbmu)\leq
    \min\big\{C_0 \sqrt{d} n^{-\frac{1}{d+1}}(\log n)^{1+\frac{1}{d+1}}, C_d n^{-\frac{1}{d+1}}\log n\big\},
\end{align*}
where $C_0$ is a constant independent of $d$ and $n$, but $C_d$ depends on $d$.
\end{theorem}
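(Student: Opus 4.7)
The plan is to reduce Theorem~\ref{thm1} to Theorem~\ref{cor1} by truncating the supports of $\mu$ and $\nu$ at radius $\log n$ and then re-applying the bounded-support analysis with the role of $M$ now taken by $\log n$. Condition~\ref{cond:1} already confines the outputs of $g_\theta$ and $e_\varphi$ to a ball of radius $\log n$, so once the inputs are clipped the entire problem is compactly supported and the hypotheses of Theorem~\ref{cor1} are met.

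First I would estimate the truncation residual. Let $\mu_T$ and $\nu_T$ denote the pushforwards of $\mu$ and $\nu$ under coordinate-wise clipping to $[-\log n,\log n]^d$ and $[-\log n,\log n]$ respectively. Since $\mathcal{F}^1$ consists of bounded $1$-Lipschitz functions, Kantorovich--Rubinstein together with Assumption~\ref{asp1} gives
\begin{align*}
 d_{\mathcal{F}^1}\!\bigl(\tilde g_{\#}\nu,\,\tilde g_{\#}\nu_T\bigr)+d_{\mathcal{F}^1}\!\bigl(\tilde e_{\#}\mu,\,\tilde e_{\#}\mu_T\bigr)=O\!\bigl(n^{-(\log n)^{\delta}/d}\bigr),
\end{align*}
uniformly in $g_\theta\in\mathcal{G}_{NN}$ and $e_\varphi\in\mathcal{E}_{NN}$. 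This tail price is of smaller order than $n^{-1/(d+1)}$ for every fixed $\delta>0$ and can therefore be absorbed into the leading term.

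Next I would trace the $M$-dependence through the proof of Theorem~\ref{cor1} with $M=\log n$. The discriminator approximation piece picks up a factor $M$ because the target functions in $\mathcal{F}^1$ are $1$-Lipschitz on a radius-$M$ domain and hence satisfy $\|f\|_\infty=O(M)$; the generator/encoder approximation and the stochastic (Rademacher-type) pieces each inherit at most one factor of $M$ through the support diameter. Substituting $M=\log n$ converts the $C_0\sqrt d$ prefactor of Theorem~\ref{cor1} into $C_0\sqrt d\,\log n$, which yields the first branch
\begin{align*}
\mathbb{E}\,d_{\mathcal{F}^1}(\hbnu,\hbmu)\le C_0\sqrt d\,n^{-1/(d+1)}(\log n)^{1+1/(d+1)}.
\end{align*}
The second branch $C_d n^{-1/(d+1)}\log n$ would be obtained on the truncated problem by replacing the covering/Rademacher argument for the discriminator with the pushforward approximation result of \citet{lu2020universal} (whose applicability is guaranteed by the subexponential tail of Assumption~\ref{asp1}): this shaves the fractional $(\log n)^{1/(d+1)}$ factor at the cost of absorbing the $\sqrt d$ dependence into a $d$-dependent constant. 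Taking the minimum of the two bounds gives the theorem.

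The main obstacle is the bookkeeping in the second step: one has to make the dependence on the domain radius $M$ explicit at every point where it enters the proof of Theorem~\ref{cor1} (discriminator Lipschitz constant and sup-norm, pushforward diameter for the generator/encoder approximation via the mapping-between-empirical-measures theorem, and the Rademacher complexity of $\mathcal{F}_{NN}$ restricted to a radius-$M$ domain), and to verify that no single step picks up a factor $M^{p}$ with $p>1$. A secondary subtlety is that the clipping must touch only the $X$-coordinate of $\tilde e_{\#}\mu$ and only the $Z$-coordinate of $\tilde g_{\#}\nu$, so that the identity coordinates of $\tilde g=(g,I)$ and $\tilde e=(I,e)$ are preserved; Condition~\ref{cond:1} ensures the nontrivial coordinates are already in the desired range.
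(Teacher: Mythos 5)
Your truncation-at-$\log n$ idea matches the paper's plan (Appendix B restricts $\mathcal{F}^1$ to $B_2^{d+1}(\sqrt{2}\log n)$ at a tail cost of $O(n^{-(\log n)^\delta/d})$), and tracing $M=\log n$ through the bounded-support analysis does produce the extra $\log n$. However, your attribution of the second branch $C_d\,n^{-1/(d+1)}\log n$ to \citet{lu2020universal} is wrong: the paper explicitly declines that route because it requires sub-gaussianity, which is strictly stronger than Assumption~\ref{asp1}. The second bound actually comes from feeding a different, $d$-implicit covering-number estimate (Lemma~\ref{lma9}, a corollary of van der Vaart--Wellner Theorem 2.7.1: $\log\mathcal{N}(\epsilon,\mathcal{F}^1,\|\cdot\|_\infty)\le c_d(\log n/\epsilon)^{d+1}$) into the \emph{same} refined Dudley inequality (Lemma~\ref{lma7}); this shaves the $(\log n)^{1/(d+1)}$ at the cost of burying $\sqrt{d}$ in $C_d$, and has nothing to do with a pushforward approximation theorem.

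More significantly, you cannot simply assert that after clipping the distributions ``the hypotheses of Theorem~\ref{cor1} are met.'' The estimator $(\hat g_\theta,\hat e_\varphi)$ is computed from the original, unclipped samples, and Condition~\ref{cond:1} caps the network outputs at $\pm\log n$. If some $\|x_i\|>\log n$ (or $\|z_i\|>\log n$), Theorem~\ref{lma6} cannot produce a $g_\theta$ whose range hits $x_i$, so the generator/encoder approximation error $\mathcal{E}_2>0$ on that event and the conclusion $\mathcal{E}_2=0$ used in Theorem~\ref{cor1} fails. The paper resolves this with a nice-set argument: on $A=\{\max_i\|x_i\|\le\log n\}\cap\{\max_i\|z_i\|\le\log n\}$ one has $\mathcal{E}_2=0$, while the $(\log n)^\delta$ factor in Assumption~\ref{asp1} gives $P(A^c)\le Cn\cdot n^{-(\log n)^\delta/d}\to 0$, so $\mathbb{E}[\mathcal{E}_2\mathbbm{1}_{A^c}]\le 2B\,P(A^c)$ is of strictly smaller order. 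This step is absent from your sketch, yet it is precisely what the $(\log n)^\delta$ in Assumption~\ref{asp1} is there for. (A smaller slip: $\|f\|_\infty\le B$ with $B$ \emph{fixed} in the definition of $\mathcal{F}^1$, so $\|f\|_\infty$ does not scale with $M$; the $M$ factor in $\mathcal{E}_1$ enters through the domain radius $R$ in Lemma~\ref{lma5}.)
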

Note that two methods are used in bounding stochastic errors (see appendix), which leads to two different bounds: one with an explicit $\sqrt{d}$ prefactor with the cost that we have an additional $\log n$ factor. Another one with an implicit prefactor but with a better $\log n$ factor.  Hence, it is a tradeoff between the explicitness of prefactor and the order of $\log n$.




Our next result generalizes the results to the case when the reference distribution $\nu$ is supported on $\mathbb{R}^k$ for $k\in \mathbb{N}_+.$

\begin{assump}\label{asp3}
The target distribution $\mu$ on $\Rbb^d$ is absolutely continuous with respect to the
Lebesgue measure on $\Rbb^d$ and the reference distribution $\nu$ on $\mathbb{R}^k$
is absolutely continuous with respect to the Lebesgue measure on $\Rbb^k$,
and $k\ll d$.
\end{assump}

With the above assumption, we have the following theorem providing theoretical guarantees for the validity of any dimensional reference $\nu$.

\begin{theorem}\label{cor2}
Suppose Assumption~\ref{asp1} and~\ref{asp3} hold, and Condition \ref{cond:1} is satisfied. By specifying generator and discriminator class structure as $W_1 L_1\ge\ceil*{\sqrt{n}}$, $W_2^2 L_2= C_1 d n $, and $W_3^2 L_3= C_2 k n $ for some constants $12\leq C_1, C_2\leq 384$ and properly choosing parameters, we have
\begin{align*}
    \mathbb{E}d_{\mathcal{F}^1}(\hbnu,\hbmu)\leq
    \min\big\{C_0 \sqrt{d} n^{-\frac{1}{d+k}}(\log n)^{1+\frac{1}{d+k}}, C_d n^{-\frac{1}{d+k}}\log n\big\},
\end{align*}
where $C_0$ is a constant independent of $d$ and $n$, but $C_d$ depends on $d$.
\end{theorem}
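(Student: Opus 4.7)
The plan is to mirror the proof strategy of Theorem~\ref{thm1} and re-run every step with the reference space upgraded from $\mathbb{R}$ to $\mathbb{R}^k$, keeping careful track of where the dimension enters. Concretely, I would begin with the decomposition of the IPM error $d_{\mathcal{F}^1}(\hbnu,\hbmu)$ that was used for Theorem~\ref{thm1}, namely into (i) a generator approximation term, (ii) an encoder approximation term, (iii) a discriminator approximation term, and (iv) two stochastic terms $d_{\mathcal{F}^1}(\tilde g_{\#}\nu,\tilde g_{\#}\hat\nu_n)$ and $d_{\mathcal{F}^1}(\tilde e_{\#}\mu,\tilde e_{\#}\hat\mu_n)$. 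The decomposition itself is dimension-free, so it carries over verbatim once we interpret $\tilde g = (g,I_k)$ and $\tilde e = (I_d,e)$ as maps into $\mathbb{R}^{d+k}$.

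Next I would handle the approximation terms using the empirical-sample pushforward construction that the paper advertises (the one underlying Theorem~\ref{lma6}). To send $\hat\nu_n$ on $\mathbb{R}^k$ onto $\hat\mu_n$ on $\mathbb{R}^d$ we need a ReLU network whose size scales like the target dimension, which is why the budget is $W_2^2 L_2=C_1 d n$; symmetrically, to send $\hat\mu_n$ onto $\hat\nu_n$ we need $W_3^2 L_3 = C_2 k n$, matching the statement. Both constructions rely only on Assumption~\ref{asp3} (absolute continuity so that the empirical points are almost surely distinct) plus the subexponential tail Assumption~\ref{asp1} to control the truncation to $\|\cdot\|\le\log n$ imposed by Condition~\ref{cond:1}. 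The discriminator approximation is done by approximating the bounded $1$-Lipschitz functions on the truncated cube in $\mathbb{R}^{d+k}$ with ReLU networks of size $W_1 L_1 \ge \lceil\sqrt{n}\rceil$, exactly as in Theorem~\ref{thm1}; the only change is that the ambient dimension of $\mathcal{F}^1$ is now $d+k$ instead of $d+1$.

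For the two stochastic terms I would invoke the same empirical-process argument used in Theorem~\ref{thm1}, applied to joint distributions on $\mathbb{R}^{d+k}$. Because the covering number of bounded Lipschitz functions on a ball of radius $\log n$ in $\mathbb{R}^{d+k}$ scales as $(\log n / \varepsilon)^{d+k}$, Dudley's integral plus the tail correction from Assumption~\ref{asp1} yields a bound of order $\sqrt{d}\,n^{-1/(d+k)}(\log n)^{1+1/(d+k)}$ along the explicit-prefactor route, and of order $C_d\,n^{-1/(d+k)}\log n$ along the route based on the result of \citet{lu2020universal}. Taking the minimum of the two and combining with the approximation bounds above (which are of the same or smaller order) gives the claimed rate.

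The main obstacle I expect is bookkeeping in the two stochastic bounds: one must verify that the $\sqrt{d}$ prefactor (rather than $\sqrt{d+k}$) still comes out correctly under the assumption $k\ll d$, and that the tail estimate in Assumption~\ref{asp1}, which is phrased only for $\|Z\|$ on $\mathbb{R}^k$ and $\|X\|$ on $\mathbb{R}^d$, suffices to control the truncation error on the joint space $\mathbb{R}^{d+k}$. A secondary subtlety is ensuring that the pushforward-of-empirical-measures construction used for $g$ still admits the ReLU network realization when the source dimension is $k>1$; this uses Assumption~\ref{asp3} (the reference has a density on $\mathbb{R}^k$, so the empirical points $Z_1,\dots,Z_n$ are almost surely in general position), and otherwise the construction of Theorem~\ref{lma6} is insensitive to the source dimension, only the target dimension appearing in the width/depth budget.
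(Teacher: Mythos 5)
Your high-level plan matches the paper's proof: re-run the Theorem~\ref{thm1} argument on the joint space $\mathbb{R}^{d+k}$, keeping the same error decomposition, the same discriminator and stochastic bounds (with $d+k$ replacing $d+1$), and the same two covering-number routes to obtain the $\sqrt{d}$-explicit and $C_d$-implicit rates. The one place where the paper does something concrete that you leave implicit is the extension of the pushforward construction (Theorem~\ref{lma6}) to a source distribution on $\mathbb{R}^k$. You say the construction is ``insensitive to the source dimension'' because the samples are ``in general position,'' but this does not by itself hand you a ReLU realization. What the paper actually does is reduce to the one-dimensional case: since $\nu$ has a density on $\mathbb{R}^k$, its first-coordinate marginal is absolutely continuous, so the first coordinates $Z_{i,1}$ are distinct a.s.; the generator $g:\mathbb{R}^k\to\mathbb{R}^d$ is then built as a piecewise-linear function that depends only on the first input coordinate (equivalently, zeroing the weights attached to coordinates $2,\dots,k$), so Lemma~\ref{appdis} applies unchanged. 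Symmetrically, for $e:\mathbb{R}^d\to\mathbb{R}^k$ one uses the first coordinate of $\mu$, and the budget picks up a factor $k$ because the output is now $k$-dimensional, giving $W_3^2L_3 = C_2 k n$. Spelling out this projection argument is the substance of the paper's proof of Theorem~\ref{cor2} and should appear explicitly in yours. One further small correction: the $C_d\,n^{-1/(d+k)}\log n$ route does not come from \citet{lu2020universal} — the paper explicitly avoids that result (it requires sub-Gaussianity) and instead gets the $C_d$ bound from the metric-entropy estimate for Lipschitz balls (Lemma~\ref{lma9}) combined with the refined Dudley inequality, while the $\sqrt{d}$-explicit route uses the Gottlieb--Kontorovich--Krauthgamer covering-number bound (Lemma~\ref{lma8}).
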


Note that the errors bounds established in Theorems \ref{cor1}-\ref{cor2} are tight up to a logarithmic factor, since the minimax rate measured in Wasserstein distance for learning distributions when the Lipschitz evaluation class is defined on $\Rbb^d$ is $\tilde{O}(n^{-\frac{1}{d}})$ \citep{liang2020}.

\section{Approximation and stochastic errors}\label{sect6}
In this section we present a novel inequality for decomposing the total error into approximation and stochastic errors and establish bounds on these errors.
\subsection{Decomposition of the estimation error}\label{sec3}

Define the approximation error of a function class $\mathcal{F}$ to another function class $\mathcal{H}$ by 
$$\mathcal{E}(\mathcal{H},\mathcal{F}):=\underset{h\in\mathcal{H}}{\sup}\underset{f\in\mathcal{F}}
{\inf}\|h-f\|_{\infty}.$$

We decompose the Dudley distance $d_{\mathcal{F}^1}(\hbnu,\hbmu)$
between the latent joint distribution and the data joint distribution into four different error terms,
\begin{itemize}
\setlength\itemsep{-0.03 cm}
\item  
 the approximation error of the discriminator class $\mathcal{F}_{NN}$ to $\mathcal{F}^1$:
$$\mathcal{E}_1=\mathcal{E}(\mathcal{F}^1,\mathcal{F}_{NN}),$$
\item the approximation error of the generator and encoder classes:
    $$\mathcal{E}_2=\inf_{g_{\theta}\in\mathcal{G}_{NN},e_{\varphi}\in\mathcal{E}_{NN}}\sup_{f_{\omega} \in\mathcal{F}_{NN}}\frac{1}{n}\sum_{i=1}^{n}\Big(f_{\omega}(g_{\theta}(z_i),z_i)-f_{\omega}(x_i,e_{\varphi}(x_i)) \Big),$$
\item the stochastic error for the latent joint distribution $\hbnu$: $$\mathcal{E}_3=\sup_{f_{\omega}\in\mathcal{F}^1} \mathbb{E} f_{\omega}(\hat{g}(z),z)- \hat{\mathbb{E}} f_{\omega}(\hat{g}(z),z),$$
\item  the stochastic error for the latent joint distribution $\hbmu$:  $$\mathcal{E}_4=\sup_{f_{\omega}\in\mathcal{F}^1}\hat{\mathbb{E}} f_{\omega}(x,\hat{e}(x))
    -\mathbb{E} f_{\omega}(x,\hat{e}(x)).$$
\end{itemize}


\begin{lemma}\label{lma4} Let $(\hg_{\theta}, \he_{\varphi})$ be the bidirectional GAN solution in \eqref{eq:2.1} and $\mathcal{F}^1$ be the uniformly bounded 1-Lipschitz function class defined in \eqref{eq:2.2}. Then the Dudley distance between the latent joint distribution $\hbnu=(\hg_{\theta}, I)_{\#}\nu$ and the data joint distribution $\hbmu=(I, \he_{\varphi})_{\#}\mu$ can be decomposed as follows
\begin{eqnarray}
\label{ErrDecomp}
    d_{\mathcal{F}^1}(\hbnu,\hbmu)
      & \leq &  2\mathcal{E}_1+\mathcal{E}_2+\mathcal{E}_3+\mathcal{E}_4.
\end{eqnarray}
\end{lemma}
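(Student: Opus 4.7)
The plan is to start from the definition $d_{\mathcal{F}^1}(\hbnu,\hbmu)=\sup_{f\in\mathcal{F}^1}\{\mathbb{E}_\nu f(\hg_\theta(Z),Z)-\mathbb{E}_\mu f(X,\he_\varphi(X))\}$, and repeatedly insert ``add-and-subtract'' terms so that the sum telescopes into four differences whose suprema are precisely $\mathcal{E}_1$ (twice), $\mathcal{E}_2$, $\mathcal{E}_3$, and $\mathcal{E}_4$. Concretely, for any $f\in\mathcal{F}^1$ I would write
\begin{align*}
\mathbb{E}_\nu f(\hg_\theta,Z)-\mathbb{E}_\mu f(X,\he_\varphi)
&=\bigl(\mathbb{E}_\nu-\hat{\mathbb{E}}_\nu\bigr)f(\hg_\theta,Z)\\
&\quad+\hat{\mathbb{E}}_\nu f(\hg_\theta,Z)-\hat{\mathbb{E}}_\mu f(X,\he_\varphi)\\
&\quad+\bigl(\hat{\mathbb{E}}_\mu-\mathbb{E}_\mu\bigr)f(X,\he_\varphi),
\end{align*}
and then take the sup over $f\in\mathcal{F}^1$ termwise. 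The first and third terms are exactly $\mathcal{E}_3$ and $\mathcal{E}_4$ by definition.

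The middle term, call it $A:=\sup_{f\in\mathcal{F}^1}\bigl[\hat{\mathbb{E}}_\nu f(\hg_\theta,Z)-\hat{\mathbb{E}}_\mu f(X,\he_\varphi)\bigr]$, is where the two network-class errors $\mathcal{E}_1$ and $\mathcal{E}_2$ enter. For each $f\in\mathcal{F}^1$ and each $\epsilon>0$, pick $f_{NN}\in\mathcal{F}_{NN}$ with $\|f-f_{NN}\|_\infty\le\mathcal{E}_1+\epsilon$ (this is possible by the definition of $\mathcal{E}(\mathcal{F}^1,\mathcal{F}_{NN})$). Adding and subtracting $f_{NN}$ under both empirical averages gives
\begin{align*}
\hat{\mathbb{E}}_\nu f(\hg_\theta,Z)-\hat{\mathbb{E}}_\mu f(X,\he_\varphi)
&\le 2(\mathcal{E}_1+\epsilon)+\hat{\mathbb{E}}_\nu f_{NN}(\hg_\theta,Z)-\hat{\mathbb{E}}_\mu f_{NN}(X,\he_\varphi).
\end{align*}
Taking the sup over $f_{NN}\in\mathcal{F}_{NN}$ on the right-hand side, and using that $(\hg_\theta,\he_\varphi)$ is the empirical inner-max minimizer in \eqref{eq:2.1}, one recognizes the remaining expression as $\mathcal{E}_2$. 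Letting $\epsilon\downarrow 0$ yields $A\le 2\mathcal{E}_1+\mathcal{E}_2$.

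Combining the three pieces gives the claimed inequality $d_{\mathcal{F}^1}(\hbnu,\hbmu)\le 2\mathcal{E}_1+\mathcal{E}_2+\mathcal{E}_3+\mathcal{E}_4$. The main obstacle, though a mild one, is the mismatch between ``$\sup$ over $\mathcal{F}^1$'' in the Dudley metric and ``$\sup$ over $\mathcal{F}_{NN}$'' in the empirical minimax problem: this is bridged exactly by the $\mathcal{E}_1$ approximation, but one has to be careful that the approximant $f_{NN}$ is chosen uniformly in $f$ only up to an $\epsilon$ that vanishes at the end, since $\mathcal{E}(\mathcal{F}^1,\mathcal{F}_{NN})$ is an infimum rather than a minimum. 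A secondary subtlety is ensuring the three decomposed suprema really decouple across $f$, which is simply the elementary fact that $\sup(a+b+c)\le\sup a+\sup b+\sup c$. No probabilistic or approximation-theoretic machinery is needed at this stage; the lemma is a purely algebraic telescoping estimate.
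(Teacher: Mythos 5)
Your proof is correct and takes essentially the same route as the paper: the same three-way add-and-subtract of empirical expectations to peel off $\mathcal{E}_3$ and $\mathcal{E}_4$, and the same use of the discriminator approximation error plus optimality of $(\hg_\theta,\he_\varphi)$ to bound the middle term by $2\mathcal{E}_1+\mathcal{E}_2$. The only cosmetic difference is that you inline the $\epsilon$-approximant argument that the paper encapsulates as the separate inequality \eqref{dIPM} (Lemma \ref{lma17} in the appendix), applied to the empirical IPM.
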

The novel decomposition (\ref{ErrDecomp})  is fundamental to our error analysis.
Based on  (\ref{ErrDecomp}),  we  bound each error term on the right side
of  (\ref{ErrDecomp}) and balance the bounds to obtain an overall bound for the bidirectional GAN estimation.


For proving Lemma \ref{lma4}, we introduce the following useful inequality, which states that for any two probability distributions, the difference in IPMs with two distinct evaluation classes will not exceed 2 times the approximation error between the two evaluation classes, that is,
%
for any probability distributions $\mu$ and $\nu$ and symmetric function classes $\mathcal{F}$ and $\mathcal{H}$,
\begin{align}
\label{dIPM}
    d_{\mathcal{H}}(\mu,\nu)-d_{\mathcal{F}}(\mu,\nu)\leq 2\mathcal{E}(\mathcal{H},\mathcal{F}).
\end{align}
It is easy to check that if we replace $d_{\mathcal{H}}(\mu,\nu)$ by $\hat{d}_{\mathcal{H}}(\mu,\nu):=\underset{h\in\mathcal{H}}{\sup}
[\hat{\mathbb{E}}_{\mu}h-\hat{\mathbb{E}}_{\nu}h]$, (\ref{dIPM}) still holds.

\begin{proof} [Proof of Lemma \ref{lma4}]
We have
\begin{align*}
    d_{\mathcal{F}^1}(\hbnu,\hbmu)=&\sup_{f_{\omega}\in\mathcal{F}^1} \mathbb{E} f_{\omega}(\hat{g}(z),z)-\mathbb{E} f_{\omega}(x,\hat{e}(x))\\
    \le &\sup_{f_{\omega}\in\mathcal{F}^1} \mathbb{E} f_{\omega}(\hat{g}(z),z)- \hat{\mathbb{E}} f_{\omega}(\hat{g}(z),z)+ \sup_{f_{\omega}\in\mathcal{F}^1}\hat{\mathbb{E}} f_{\omega}(\hat{g}(z),z)-\hat{\mathbb{E}} f_{\omega}(x,\hat{e}(x))\\
    &+\sup_{f_{\omega}\in\mathcal{F}^1}\hat{\mathbb{E}} f_{\omega}(x,\hat{e}(x))
    -\mathbb{E} f_{\omega}(x,\hat{e}(x))\\
    =& \mathcal{E}_3+\mathcal{E}_4+ \sup_{f_{\omega}\in\mathcal{F}^1}\hat{\mathbb{E}} f_{\omega}(\hat{g}(z),z)
    -\hat{\mathbb{E}} f_{\omega}(x,\hat{e}(x)).
\end{align*}
Denote $A:=\sup_{f_{\omega}\in\mathcal{F}^1}\hat{\mathbb{E}} f_{\omega}(\hat{g}(z),z)
    -\hat{\mathbb{E}} f_{\omega}(x,\hat{e}(x))$. By (\ref{dIPM}) 
    and the optimality of the bidirectional GAN solutions, $A$ satisfies
\begin{align*}
   A&=\sup_{f_{\omega}\in\mathcal{F}^1}\frac{1}{n}\sum_{i=1}^{n}\Big(f_{\omega}(\hat{g}(z_i),z_i)-f_{\omega}(x_i,\hat{e}(x_i)) \Big)\\
    & \leq \sup_{f_{\omega}\in\mathcal{F}_{NN}}\frac{1}{n}\sum_{i=1}^{n}\Big(f_{\omega}(\hat{g}(z_i),z_i)-f_{\omega}(x_i,\hat{e}(x_i)) \Big)+2\mathcal{E}(\mathcal{F}^1,\mathcal{F}_{NN})
    \\
    &= \inf_{g_{\theta}\in\mathcal{G}_{NN},e_{\varphi}\in\mathcal{E}_{NN}}\sup_{f_{\omega} \in\mathcal{F}_{NN}}\frac{1}{n}\sum_{i=1}^{n}\Big(f_{\omega}(g_{\theta}(z_i),z_i)-f_{\omega}(x_i,e_{\varphi}(x_i)) \Big)+2\mathcal{E}_1\\
    &= 2\mathcal{E}_1+\mathcal{E}_2.
\end{align*}

\end{proof}
Note that we cannot directly apply the symmetrization technic (see appendix) to $\cE_3$ and $\cE_4$ since $e^*$ and $g^*$ are correlated with $x_i$ and $z_i$. However, this problem can be  solved by replacing the samples $(x_i,z_i)$ in the empirical terms in $\cE_3$ and $\cE_4$ with ghost samples $(x'_i,z'_i)$ independent of $(x_i,z_i)$ and replacing $g^*$ and $e^*$ with $g^{**}$ and $e^{**}$ which are obtained from the ghost samples, respectively. That is, we replace $\hat{\mathbb{E}} f_{\omega}(g^*(z),z)$ and $\hat{\mathbb{E}} f_{\omega}(x,e^*(x))$ with $\hat{\mathbb{E}} f_{\omega}(g^{**}(z'),z') $  and $\hat{\mathbb{E}} f_{\omega}(x',e^{**}(x'))$ in $\cE_3$ and $\cE_4$, respectively. Then we can proceed with the same proof of Lemma \ref{lma4} and apply the symmetrization technic  to $\cE_3$ and $\cE_4,$ since $(g^*(z_i),z_i)$ and $(g^{**}(z'_i),z'_i)$ have the same distribution. To simplify the notation, we will just use $\hat{\mathbb{E}} f_{\omega}(g^*(z),z)$ and $\hat{\mathbb{E}} f_{\omega}(x,e^*(x))$ to denote $\hat{\mathbb{E}} f_{\omega}(g^{**}(z'),z') $  and $\hat{\mathbb{E}} f_{\omega}(x',e^{**}(x'))$ here, respectively.

\subsection{Approximation errors}
We now discuss the errors due to the discriminator approximation and the generator and
 encoder approximation.

\subsubsection{The discriminator approximation error $\mathcal{E}_1$} \label{sec41}
The discriminator approximation error
$\mathcal{E}_1$
describes how well the discriminator neural network class approximates functions from
the Lipschitz class $\mathcal{F}^1$.
Lemma \ref{lma5} below can be applied  to obtain the 
 neural network approximation error for Lipschitz functions. It leads to a quantitative and non-asymptotic approximation rate in terms of the width and depth of the neural networks when bounding
 $\mathcal{E}_1$.

\begin{lemma}[Shen et al. (2021)]\label{lma5}
Let $f$ be a Lipschitz continuous function defined on $[-R,R]^d$. For arbitrary $W, L\in \mathbb{N}_+$, there exists a function $\psi$ implemented by a ReLU feedforward neural network with width $W$ and depth $L$ such that
\begin{align*}
    ||f-\psi||_{\infty}= O\big(\sqrt{d}R (WL)^{-\frac{2}{d}}\big).
\end{align*}
\end{lemma}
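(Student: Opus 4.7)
The plan is to follow the now-standard three-stage construction for approximating Lipschitz functions by deep ReLU networks: (i) reduce the domain to the unit cube, (ii) implement a quantization map that assigns each input to the index of a cell in a uniform partition, and (iii) implement a lookup subnetwork that emits the value of $f$ at the representative point of that cell. The key observation driving the exponent $2/d$ is that, via a ``bit extraction'' construction, a ReLU network of width $W$ and depth $L$ can interpolate on the order of $(WL)^2$ prescribed values along a line, which is quadratically better than the naive parameter count $WL$.

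First, I would rescale $x \mapsto (x + R\mathbf{1})/(2R)$ so the target becomes a Lipschitz function $\tilde f$ on $[0,1]^d$ with $\mathrm{Lip}(\tilde f) = 2R\,\mathrm{Lip}(f)$; this is where the factor $R$ enters. Choose $N = \lfloor c\,(WL)^{2/d}\rfloor$ and partition $[0,1]^d$ into $N^d$ axis-aligned cubes $\{Q_{\boldsymbol{j}}\}_{\boldsymbol{j}\in\{0,\dots,N-1\}^d}$ of side $1/N$. Define the piecewise-constant candidate $\psi(x) = \tilde f(c_{\boldsymbol{j}(x)})$, where $c_{\boldsymbol{j}}$ is the center of the cell containing $x$. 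Since the $L_2$ diameter of each cube is $\sqrt{d}/N$, the pointwise error is bounded by $\mathrm{Lip}(\tilde f)\cdot \sqrt{d}/(2N) = O\bigl(\sqrt{d}\,R\,(WL)^{-2/d}\bigr)$; this is exactly where the $\sqrt{d}$ prefactor appears.

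Second, I would realize $\psi$ by two ReLU subnetworks composed in series. The first, of width $O(W)$ and depth $O(L)$, maps $x \in [0,1]^d$ to an integer cell index
\begin{equation*}
J(x) \;=\; \sum_{k=1}^{d} N^{\,k-1}\lfloor N x_k \rfloor \;\in\; \{0,1,\dots,N^d-1\},
\end{equation*}
where each one-dimensional floor $\lfloor N x_k \rfloor$ is produced (outside an arbitrarily thin boundary layer) by a ReLU ``staircase'' that is known to be constructible at cost $O(W)$ neurons per layer. The second subnetwork is a lookup that sends each integer $J$ to the prescribed value $\tilde f(c_{\boldsymbol{j}})$; by writing these $N^d \lesssim (WL)^2$ target values in base $W$ (or $WL$) and using $W$ neurons per layer as a shift register, $L$ layers suffice to extract the correct entry. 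Finally, the values $\tilde f(c_{\boldsymbol{j}})$ at cell centers are hard-coded into the weights of the lookup network.

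The main obstacle is precisely the lookup stage, i.e.\ the bit-extraction construction. A naive table lookup would need depth or width of order $N^d$, forcing $N^d \lesssim WL$ and yielding only the rate $(WL)^{-1/d}$; the nontrivial point, going back to Bartlett et al.\ and sharpened by Shen--Yang--Zhang, is that $\log_2(WL)$ bits can be stored per layer with a width-$W$ register, squaring the effective capacity. A secondary nuisance is the thin boundary layer on which the staircase quantization misfires: this is handled by a small random shift of the grid so that the boundary set contributes at most $O(1/N)$ in expectation, which is absorbed into the main bound. Combining the approximation error $O(\sqrt{d}/N)$ with the two stages $O(W)$-by-$O(L)$ each, reallocating constants between them so the total width is $W$ and total depth $L$, and tracing the Lipschitz constant through the rescaling yields the claimed $O\bigl(\sqrt{d}\,R\,(WL)^{-2/d}\bigr)$ bound.
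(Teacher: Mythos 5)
This lemma is cited in the paper as an external result (Shen et al.\ 2021, invoked in Appendix~F as Theorem~4.3 of that reference); the paper itself provides no proof of it, so your sketch can only be judged on its own merits rather than against the authors' argument.

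Your reduction (rescale, partition into $N^d$ cubes with $N\asymp (WL)^{2/d}$, piecewise-constant approximation at cell centers, quantization plus bit-extraction lookup) is indeed the mechanism behind the $(WL)^{-2/d}$ rate in the Shen--Yang--Zhang construction, and your accounting of where the $\sqrt d$ and $R$ factors come from is correct. However, there is a genuine gap in how you dispose of the quantization boundary layer. You propose a random shift of the grid so that ``the boundary set contributes at most $O(1/N)$ \emph{in expectation}.'' The lemma asserts a bound in the sup norm $\|f-\psi\|_\infty$ over all of $[-R,R]^d$; an expectation bound over a random grid cannot deliver that, since for every fixed shift some inputs still land in the ramp region of the ReLU staircase, where the quantizer outputs a non-integer index and the lookup subnetwork receives input it was never designed for. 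The deterministic fixes used by Shen et al.\ are of a different character: one either (i) exhibits an explicit network that, on the thin slabs, interpolates between the two adjacent cell values---which is safe precisely because $f$ is Lipschitz and the adjacent-cell values differ by $O(1/N)$---or (ii) proves the bound on $[0,1]^d$ minus a deterministic ``trifling region'' of controllable width and then argues separately that the extension costs nothing in $L^\infty$ for Lipschitz targets. Either route requires verifying that the lookup/bit-extraction subnetwork degrades gracefully when its index input is perturbed, which you do not address. A secondary omission is that you take for granted, without verification, that a width-$W$, depth-$L$ ReLU network can memorize $\Theta((WL)^2)$ prescribed values via bit extraction; that capacity claim is the crux of why the exponent is $2/d$ rather than $1/d$, and a complete proof must establish it (or cite it precisely), not merely assert it.
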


By Lemma \ref{lma5} and our choice of the architecture of discriminator class $\mathcal{F}_{NN}$ in the theorems, we have $\mathcal{E}_1=O\big(\sqrt{d} (W_1 L_1)^{-\frac{2}{d+1}}\log n\big)$.  Theorem \ref{lma5} also informs about how to choose the architecture of the discriminator networks based on how small we want the approximation error $\mathcal{E}_1$ to be.
By setting  $(W_1 L_1)^2\ge n$, $\mathcal{E}_1$ is dominated by the stochastic terms $\mathcal{E}_3$ and $\mathcal{E}_4$. 

\subsubsection{The generator and encoder approximation error $\mathcal{E}_2$} \label{sec42}
The generator and encoder approximation error $\mathcal{E}_2$
describes how powerful the generator and encoder classes are in pushing the empirical distributions $\hat{\mu}_n$ and $\hat{\nu}_n$ to each other.
A natural question is
 \begin{itemize}
    \item Can we find some generator and encoder neural network functions such that $\mathcal{E}_2=0$?
\end{itemize}
Most of the current literature concerning the error analysis of GANs applied the optimal transport theory \citep{villani2008optimal}  to minimize an error term similar to $\mathcal{E}_2$, see, for example,  \citet{zhao}. However,
 the existence of the optimal transport function from $\mathbb{R}\to\mathbb{R}^d$ is not guaranteed. Therefore, the existing analysis of GANs can only deal with the scenario 
 when  the reference and the target data distribution are assumed to have the same dimension.
This equal dimensionality assumption is not satisfied in the actual training of GANs or bidirectional GANs
in many applications.
Here, instead of using the optimal transport theory, we establish the following approximation results in Theorem \ref{lma6}, which enables us to forgo the equal dimensionality assumption.


\begin{theorem}\label{lma6}
Suppose that $\nu$ supported on $\mathbb{R}$ and $\mu$ supported on $\mathbb{R}^d$ are both absolutely continuous w.r.t. the Lebesgue measures, and $z_i's$ and $x_i's$ are i.i.d. samples from $\nu$ and $\mu$, respectively for $1\leq i\leq n$. Then there exist generator and encoder neural network functions $g: \mathbb{R}\mapsto\mathbb{R}^d$ and $e: \mathbb{R}^d\mapsto\mathbb{R}$ such that $g$ and $e$ are inverse bijections of each other between $\{z_i: 1\le i\le n\}$ and $\{x_i: 1\le i\le n\}$ up to a permutation. Moreover, such neural network functions $g$ and $e$ can be obtained by properly specifying $W_2^2 L_2= c_2 d n $ and $W_3^2 L_3= c_3 n$ for some constant $12\le c_2, c_3\le 384$. 
\end{theorem}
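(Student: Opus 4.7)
}

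The plan is to use Assumption~\ref{asp2} to reduce the construction of $g$ and $e$ to a one-dimensional ReLU memorization problem, fixing a pairing of the samples and then realising the corresponding piecewise-linear interpolants as ReLU networks with the prescribed architecture.

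First I would observe that absolute continuity implies almost surely the scalars $z_1,\ldots,z_n$ are pairwise distinct in $\mathbb{R}$ and the vectors $x_1,\ldots,x_n$ are pairwise distinct in $\mathbb{R}^d$. Moreover, since for any two indices $i\neq j$ the set of directions $w$ with $\langle w, x_i-x_j\rangle=0$ is a hyperplane in $\mathbb{R}^d$, a union bound shows that almost every $w\in\mathbb{R}^d$ separates the sample, i.e. $\langle w, x_1\rangle,\ldots,\langle w, x_n\rangle$ are pairwise distinct; fix such a $w$. Fix an arbitrary permutation $\pi$ of $\{1,\ldots,n\}$ and set up the pairing $(z_i, x_{\pi(i)})$.

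For the generator, I would construct $g:\mathbb{R}\to\mathbb{R}^d$ by invoking a ReLU memorisation result: given $n$ pairs $(z_i, y_i)\in\mathbb{R}\times\mathbb{R}^d$ with distinct $z_i$, there exists a ReLU feedforward network of width $W_2$ and depth $L_2$ with $W_2^2 L_2=c_2 d n$ for some $12\le c_2\le 384$ that satisfies $g(z_i)=y_i$ for every $i$. Applying this with $y_i=x_{\pi(i)}$ yields the generator. The construction is the standard sort-then-interpolate scheme: after sorting the $z_i$'s, one builds a bit-extraction-style piecewise-linear interpolant in each of the $d$ output coordinates, and the factor $d$ enters precisely through the output dimension. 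For the encoder $e:\mathbb{R}^d\to\mathbb{R}$, I would use a two-stage network. The first affine map is $x\mapsto\langle w, x\rangle$, collapsing the input to a scalar with $n$ distinct values $\langle w, x_{\pi(i)}\rangle$; this consumes the input layer without affecting the hidden widths. The remaining part of $e$ is then a one-dimensional ReLU memorisation network mapping $\langle w, x_{\pi(i)}\rangle \mapsto z_i$, for which the same sort-and-interpolate construction yields width $W_3$ and depth $L_3$ with $W_3^2 L_3=c_3 n$, $12\le c_3\le 384$. By construction $e(g(z_i))=z_i$ and $g(e(x_{\pi(i)}))=x_{\pi(i)}$, so $g$ and $e$ are mutual inverse bijections between $\{z_i\}$ and $\{x_i\}$ up to $\pi$.

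The main obstacle is pinning down the explicit constants $12\le c_2,c_3\le 384$, which requires a careful accounting of the width/depth cost of the bit-extraction interpolation: the upper bound $384$ comes from a generous but universal choice of the ReLU interpolation template, while the lower bound $12$ reflects the minimum size needed to realise $n$ distinct breakpoints with $d$-dimensional outputs. A secondary subtlety is verifying that the first affine layer in $e$ genuinely fits within the width convention used for $\mathcal{E}_{NN}$ (so that the dimension-reduction step does not inflate $W_3$); this follows because the input-to-first-hidden-layer map is a $W_3\times d$ matrix and carries no restriction beyond the number of rows being at most $W_3$, so the projection by $w$ is absorbed into that layer at no extra cost.
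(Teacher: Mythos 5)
Your proposal is essentially the same as the paper's argument: both sort the $z_i$'s, pair them with the $x_i$'s (up to a permutation), build a continuous piecewise-linear interpolant for $g$, reduce the encoder to a one-dimensional problem, and invoke a ReLU piecewise-linear representation result (the paper cites Lemma~3.1 of \citet{yang2021capacity}) to get the $W^2 L \asymp dn$ scaling. The one cosmetic difference is that the paper simply uses the first coordinate of the $x_i$'s (distinct a.s.\ by absolute continuity of $\mu$) rather than a generic separating direction $w$, which avoids the extra union-bound step; and your constant bound $12\le c\le 384$ is correct but is deferred rather than derived from the explicit breakpoint count $n=(W-d-1)\floor*{\frac{W-d-1}{6d}}\floor*{\frac{L}{2}}$ as the paper does.
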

\begin{proof}
By the absolute continuity of $\nu$ and $\mu$, all the $z_i's$ and $x_i's$ are distinct a.s.. We can reorder $z_i's$ from the smallest to the largest, so $z_1<z_2<\ldots<z_n$. Let $z_{i+1/2}$ be any point between $z_i$ and $z_{i+1}$ for $i\in\{1,2,\ldots,n-1\}$. We define the continuous piece-wise linear function $g: \mathbb{R}\mapsto\mathbb{R}^d$ by
\begin{align*}
    g(z)=
    \begin{cases}
    x_1 & z\leq z_{1},\\
    \frac{z-z_{i+1/2}}{z_i-z_{i+1/2}}x_i+\frac{z-z_{i}}{z_{i+\frac{1}{2}}-z_{i}}x_{i+1} & z=(z_i, z_{i+1/2}), \text{ for } i=1,\ldots,n-1,\\
    x_{i+1} & z\in[z_{i+1/2},z_{i+1}], \text{ for } i=1,\ldots,n-2,\\
    x_n & z\geq z_{n-1+1/2}.
    \end{cases}
\end{align*}
By \citet[Lemma 3.1]{yang2021capacity} , $g\in \mathcal{NN}(W_2,L_2)$ if $n\le (W_2-d-1)\floor*{\frac{W_2-d-1}{6d}}\floor*{\frac{L_2}{2}}$. Taking $n= (W_2-d-1)\floor*{\frac{W_2-d-1}{6d}}\floor*{\frac{L_2}{2}}$, a simple calculation shows $W^2_2 L_2 = cdn$ for some constant $12\le c\le 384$.  The existence of neural net function $e$ can be constructed in the same way due to the fact that the first coordinate of $x_i's$ are distinct almost surely.
\end{proof}

When the number of point masses of the empirical distributions are relatively moderate compared with the structure of the neural nets, we can approximate empirical distributions arbitrarily well with any empirical distribution with the same number of point masses pushforwarded by the neural nets.

Theorem \ref{lma6} provides an effective 
 way to specify the architecture of generator and encoder classes. According to this lemma, we can take $n=\frac{W_2-d}{2}\floor*{\frac{W_2-d}{6d}}\floor*{\frac{L_2}{2}}+2= \frac{W_3-1}{2}\floor*{\frac{W_3-1}{6}}\floor*{\frac{L_3}{2}}+2$, which gives rise to $W_2^2 L_2/d\asymp W_3^2 L_3\asymp n $. 
More importantly, Theorem \ref{lma6} can be applied to bound $\mathcal{E}_2$ as follows.
\begin{align*}
\mathcal{E}_2=&\inf_{g_{\theta}\in\mathcal{G}_{NN},e_{\varphi}\in\mathcal{E}_{NN}}\sup_{f_{\omega} \in\mathcal{F}_{NN}}\frac{1}{n}\sum_{i=1}^{n}\Big(f_{\omega}(g_{\theta}(z_i),z_i)-f_{\omega}(x_i,e_{\varphi}(x_i)) \Big)\\
\le &\inf_{g_{\theta}\in\mathcal{G}_{NN}}\sup_{f_{\omega} \in\mathcal{F}_{NN}}\frac{1}{n}\sum_{i=1}^{n}\Big(f_{\omega}(g_{\theta}(z_i),z_i)-f_{\omega}(x_i,z_i)\Big)\\
&+\inf_{e_{\varphi}\in\mathcal{E}_{NN}}\sup_{f_{\omega} \in\mathcal{F}_{NN}}\frac{1}{n}\sum_{i=1}^{n}\Big(f_{\omega}(x_i,z_i)-f_{\omega}(x_i,e_{\varphi}(x_i))\Big)\\
=&0.
\end{align*}
We simply reordered $z_i's$ and $x_i's$ as in the proof.
Therefore, this error term can be perfectly eliminated.

\subsection{Stochastic errors}
The stochastic error $\mathcal{E}_3$ ($\mathcal{E}_4$)
quantifies how close the empirical distribution and the true latent joint distribution (data joint distribution) are with the Lipschitz class $\mathcal{F}^1$ as the evaluation class under IPM. We apply the results in the refined Dudley inequality  \citep{schreuder2020bounding}
in Lemma \ref{lma7} to bound $\mathcal{E}_3$ and $\mathcal{E}_4$.
\begin{lemma}[Refined Dudley Inequality]\label{lma7}
For a symmetric function class $\mathcal{F}$ with $\sup_{f\in\mathcal{F}} ||f||_{\infty}\leq M$, we have
\begin{align*}
    \mathbb{E}[d_{\mathcal{F}}(\hat{\mu}_n,\mu)]\leq \inf_{0<\delta<M}\left(4\delta+\frac{12}{\sqrt{n}}\int_{\delta}^M\sqrt{\log \mathcal{N}(\epsilon,\mathcal{F},||\cdot||_{\infty})}d\epsilon\right).
\end{align*}
\end{lemma}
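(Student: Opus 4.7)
The plan is to prove the refined Dudley inequality by combining two classical ingredients: a symmetrization step that converts the supremum deviation of the empirical measure into a Rademacher complexity, followed by a truncated chaining argument whose early termination at level $\delta$ produces the two-term bound in the statement.

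First I would symmetrize. Introduce an independent ghost sample $X_1', \ldots, X_n' \stackrel{i.i.d.}{\sim} \mu$ and i.i.d. Rademacher variables $\epsilon_1, \ldots, \epsilon_n$ independent of everything else. Using $\mathbb{E}_\mu f = \mathbb{E}[\hat{\mathbb{E}}_n' f]$ and Jensen's inequality, together with the fact that $f(X_i) - f(X_i')$ is symmetric in distribution, one obtains the standard bound
\[
\mathbb{E}\big[d_{\mathcal{F}}(\hat{\mu}_n,\mu)\big] \;\leq\; 2\,\mathbb{E}\Big[\sup_{f\in\mathcal{F}}\frac{1}{n}\sum_{i=1}^{n}\epsilon_i f(X_i)\Big].
\]
Because $\mathcal{F}$ is symmetric, the $\sup$ equals the $\sup$ of the absolute value up to a harmless factor, so it suffices to bound the conditional Rademacher complexity given $X_1,\ldots,X_n$.

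Next I would carry out the chaining. Fix $\delta \in (0,M)$, set the geometric scales $\delta_k = M \cdot 2^{-k}$ for $k = 0,1,\ldots,K$ where $K$ is the largest integer with $\delta_K \geq \delta$, and for each $k$ choose a minimal $\|\cdot\|_\infty$-cover $\mathcal{N}_k$ of $\mathcal{F}$ with cardinality $\mathcal{N}(\delta_k, \mathcal{F}, \|\cdot\|_\infty)$. Let $\pi_k(f) \in \mathcal{N}_k$ denote a nearest net point to $f$. Writing the telescoping identity
\[
f \;=\; \pi_0(f) + \sum_{k=1}^{K}\bigl(\pi_k(f)-\pi_{k-1}(f)\bigr) + \bigl(f-\pi_K(f)\bigr),
\]
the last remainder satisfies $\|f-\pi_K(f)\|_\infty \leq \delta_K \leq 2\delta$, so its Rademacher average is bounded pointwise by $2\delta$; after the factor $2$ from symmetrization this contributes $4\delta$. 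For each chaining increment, $\|\pi_k(f)-\pi_{k-1}(f)\|_\infty \leq 3\delta_{k-1}$ and the number of such pairs is at most $\mathcal{N}(\delta_k,\mathcal{F},\|\cdot\|_\infty)^2$; Hoeffding's inequality for bounded Rademacher sums together with the maximal inequality for sub-Gaussian variables yields
\[
\mathbb{E}\Big[\max_{(u,v)}\frac{1}{n}\sum_{i=1}^n \epsilon_i(u-v)(X_i)\Big] \;\leq\; \frac{C\,\delta_{k-1}}{\sqrt{n}}\,\sqrt{\log \mathcal{N}(\delta_k,\mathcal{F},\|\cdot\|_\infty)}.
\]
Summing over $k=1,\ldots,K$ and bounding the resulting Riemann sum $\sum_k \delta_{k-1}\sqrt{\log \mathcal{N}(\delta_k,\cdot)}$ by a constant multiple of the integral $\int_\delta^M \sqrt{\log \mathcal{N}(\epsilon,\mathcal{F},\|\cdot\|_\infty)}\,d\epsilon$ (using the geometric spacing $\delta_{k-1}-\delta_k = \delta_k$ and the monotonicity of the covering number in $\epsilon$) produces the claimed integral term with constant $12$. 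Since $\delta$ is arbitrary in $(0,M)$, I would take the infimum to finish.

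The main obstacle is arranging the constants correctly in the chaining step: one must track (i) the factor $2$ from symmetrization, (ii) the sub-Gaussian constant for Rademacher averages of $\|\cdot\|_\infty$-bounded functions (which gives a $\sqrt{2}$ type factor), (iii) the squaring in $|\mathcal{N}_k|\cdot|\mathcal{N}_{k-1}| \leq \mathcal{N}(\delta_k,\cdot)^2$ which produces an extra $\sqrt{2}$ inside the square root, and (iv) the Riemann-sum-to-integral conversion which produces another factor of $2$ from the geometric spacing. These combine to the universal constants $4$ and $12$ in the stated inequality. A minor subtlety is verifying that the symmetry of $\mathcal{F}$ is enough to upgrade $\sup_f \sum_i \epsilon_i f(X_i)$ to control the one-sided empirical deviation without an additional factor; this is where the symmetry hypothesis in the lemma is used.
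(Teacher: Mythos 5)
Your proposal is correct and follows essentially the same route as the paper's proof: symmetrization via ghost samples and Jensen, then a chaining argument at geometric scales $M\cdot 2^{-k}$ with a sub-Gaussian maximal inequality (Massart's lemma in the paper, Hoeffding plus the finite-max bound in yours — these are the same tool), a Riemann-sum-to-integral conversion using the dyadic spacing, and an infimum over the truncation level $\delta$. The one cosmetic difference is that the paper runs the chaining with $L_2(P_n)$-covering numbers and upgrades to $\|\cdot\|_\infty$ at the very end via $\mathcal{N}(r,\mathcal{F},L_2(P_n))\le\mathcal{N}(r,\mathcal{F},\|\cdot\|_\infty)$, whereas you chain directly against $\|\cdot\|_\infty$-covers; both yield the stated bound.
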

The original Dudley inequality \citep{dudley, wellner} suffers from the problem that if the covering number $\mathcal{N}(\epsilon, \mathcal{F}, ||\cdot||_{\infty})$ increases too fast as $\epsilon$ goes to $0$, then the upper bound will be infinity, which is totally meaningless. The improved Dudley inequality circumvents such a problem by only allowing $\epsilon$ to integrate from $\delta>0$ as is shown in Lemma \ref{lma7}, which also indicates that $\mathbb{E}\mathcal{E}_3$ scales with the covering number $\mathcal{N}(\epsilon,\mathcal{F}^1, ||\cdot||_{\infty})$.

By calculating the covering number of $\mathcal{F}^1$ and utilizing
the refined Dudley inequality,
we can obtain the upper bound
\begin{align}
    \max\{\mathbb{E}\mathcal{E}_3, \mathbb{E}\mathcal{E}_4\}&= O\left( C_d n^{-\frac{1}{d+1}}\log n\wedge \sqrt{d} n^{-\frac{1}{d+1}}(\log n)^{1+\frac{1}{d+1}}\right).
\label{eq:4.6}
\end{align}

\section{Related work}
\label{sec11}

Recently, several impressive works have studied the challenging problem of the convergence properties of unidirectional GANs. \citet{arora} noted that training of GANs may not have good generalization properties in the sense that even if training may appear successful but the trained distribution may be far from target distribution in standard metrics. On the other hand, \citet{bai2018} showed that GANs can learn distributions
in Wasserstein distance with polynomial sample complexity.
 \citet{liang2020} studied the rates of convergence of a class of GANs,  including Wasserstein, Sobolev and MMD GANs.  This work also established the nonparametric minimax optimal rate under the Sobolev IPM.
 The results of \citet{bai2018}  and \citet{liang2020} require invertible generator networks, meaning all the weight matrices need to be full-rank, and the activation function needs to be the invertible leaky ReLU activation. \citet{zhao} established an upper bound for the estimation error rate  under H\"older evaluation and target density classes, where $\mathcal{H}^{\beta}$ is H\"older class with regularity $\beta$ and the density of the target $\mu$ is assumed to belong to $\mathcal{H}^{\alpha}$.
They assumed that the reference distribution has the same dimension as the target distribution and applied the optimal transport theory to control the generator approximation error.
However, how the prefactor depends in the error bounds on the dimension $d$ in the existing results \citep{liang2020, zhao} is either not clearly described or is exponential.
In high-dimensional settings with large $d$, this makes a substantial difference in the quality of the error bounds.

\citet{singhnonparametric} studied minimax convergence rates of nonparametric density estimation under a class of adversarial losses and investigated how the choice of loss and the assumed smoothness of the underlying density together determine the minimax rate; they also discussed
connections to learning generative models in a minimax statistical sense.
\citet{uppal2019nonparametric} generates the idea of Sobolev IPM to Besov IPM, where both target density and the evaluation classes  are Besov classes. They also showed
 how their results imply bounds on the statistical error of a GAN.

These results provide important insights in the understanding of GANs. However, as we mentioned
earlier,  some of the assumptions made in these results, including equal dimension between the reference and target distributions and bounded support of the distributions, are not satisfied in the training of GANs in practice.
Our results avoid these assumptions. Moreover, the prefactors in our error bounds are clearly described as being dependent on the square root of the dimension $d$. Finally, the aforementioned results only dealt with unidirectional GANs. Our work is the first to address the convergence properties of bidirectional GANs.



\section{Conclusion}
\label{conclusion}
This paper derives the error bounds for the bidirectional GANs under the Dudley distance between the latent joint distribution and the data joint distribution. The results are established without the two crucial conditions that are commonly assumed in the existing literature: equal dimensionality between the reference and the target distributions and bounded support for these distributions.
Additionally, this work contributes to the neural network approximation theory by
constructing neural network functions such that the pushforward distribution of an empirical distribution can perfectly approximate another arbitrary empirical distribution with a different dimension as long as their number of point masses are equal. A novel decomposition of integral probability metric is also developed for error analysis of bidirectional GANs, which can be useful in other generative learning problems.

{\color{black}
A limitation of our results, as well as all the existing results on the convergence properties of GANs, is that they suffer from the curse of dimensionality, which cannot be circumvented by assuming sufficient smoothness assumptions. In many applications, high-dimensional complex data such as images, texts and natural languages, tend to be supported on approximate lower-dimensional manifolds. It is desirable
to take into such structure in the theoretical analysis. An important extension of the present results is to show that bidirectional GANs can circumvent the curse of dimensionality if the target distribution is assumed to be supported on an approximate lower-dimensional manifold. This appears to be a technically challenging problem and will be pursued in our future work.
}

\section*{Acknowledgements}
The authors wish to thank the three anonymous reviewers for
their insightful comments and constructive suggestions that helped improve the paper significantly.

The work of J. Huang is partially supported by the U.S. NSF grant DMS-1916199.
The work of Y. Jiao is supported in part by the National Science Foundation of China
under Grant 11871474 and by the research fund of KLATASDSMOE. The work of Y. Wang is
supported in part by the Hong Kong Research Grant Council grants 16308518 and 16317416
and HK Innovation Technology Fund ITS/044/18FX, as well as Guangdong-Hong Kong-Macao
Joint Laboratory for Data-Driven Fluid Mechanics and Engineering Applications.

\newpage

\bibliographystyle{apa}
\bibliography{bigan_bib}

\newpage

\appendix
\numberwithin{equation}{section}
\makeatletter
\newcommand{\section@cntformat}{Appendix \thesection:\ }

\makeatother


\noindent
\textbf{\large Appendix}

In the appendix, we first  prove Theorem \ref{thm1}, and then Theorems \ref{cor1} and \ref{cor2}.

\section{Notations and Preliminaries}\label{sec12}

We use $\sigma$ to denote the ReLU activation function in neural networks, which is $\sigma(x)=\max\{x,0\}$. Without further indication, $\|\cdot\|$ represents the $L_2$ norm. For any function $g$, let $\|g\|_{\infty}=\sup_{x}\|g(x)\|$.
We use notation $O(\cdot)$ and $\tilde{O}(\cdot)$ to express the order of function slightly differently, where $O(\cdot)$ omits the universal constant not relying on $d$ while $\tilde{O}(\cdot)$ omits the constant related to $d$. We use $B_2^{d}(a)$ to denote $L_2$ ball in $\mathbb{R}^d$ with center at $\mathbf{0}$ and radius $a$.
Let $g_{\#}\nu$ be the pushforward distribution of $\nu$ by function $g$ in the sense that $g_{\#}\nu(A)=\nu(g^{-1}(A))$ for any measurable set $A$.


The $r$-\textbf{covering number} of some class $\mathcal{F}$ w.r.t. norm $\|\cdot\|$ is the minimum number of $r$-$\|\cdot\|$ radius balls needed to cover $\mathcal{F}$, which we denote as $\mathcal{N}(r, \mathcal{F},\|\cdot\|)$. We denote $\mathcal{N}(r,\mathcal{F},L_2(P_n))$ as the covering number of $\mathcal{F}$ w.r.t. $L_2(P_n)$, which is defined as $\|f\|^2_{L_2(P_n)}=\frac{1}{n}\sum_{i=1}^n \|f(X_i)\|^2$ where $X_1,\ldots,X_n$ are the empirical samples. We denote $\mathcal{N}(r,\mathcal{F},L_{\infty}(P_n))$ as the covering number of $\mathcal{F}$ w.r.t. $L_{\infty}(P_n)$, which is defined as $\|f\|_{L_{\infty}(P_n)}= \max_{1\leq i\leq n}\|f(X_i)\|$. It is easy to check that
\begin{align*}
    \mathcal{N}(r,\mathcal{F},L_2(P_n)) \leq \mathcal{N}(r,\mathcal{F},L_{\infty}(P_n)) \leq \mathcal{N}(r, \mathcal{F},\|\cdot\|_{\infty}).
\end{align*}

\section{Restriction on the domain of uniformly bounded Lipschitz function class $\mathcal{F}^1$}\label{sec23}
So far, most of the related works
assume that the target distribution $\mu$ is  supported on a compact set, for example \cite{zhao} and \cite{liang2020}. To remove the compact support assumption, we need to assume Assumption 1, i.e.,  the tails of the target $\mu$ and the reference $\nu$ are subexponential. Define $\mathcal{F}_n^1:=\{f|_{B_2^{d+1}(\sqrt{2}\log n)}:f\in\mathcal{F}^1\}$.
{ In this section, we show that proving Theorem \ref{thm1} is equivalent to establishing the same convergence rate but with the domain restricted function class $\mathcal{F}_n^1$ as the evaluation class}.

Under Assumption \ref{asp1} and by the Markov inequality, we have
\begin{align}
    P_{\nu}(\|z\|>\log n)\leq \frac{\mathbb{E}_{\nu} \|z\| \mathbbm{1}_{\{\|z\|>\log n\}}}{\log n}= O( n^{-\frac{(\log n)^{\delta}}{d}}/\log n)
    \label{eq:2.3}
\end{align}

The Dudley distance between latent joint distribution $\hbnu$ and data joint distribution $\hbmu$ is defined as
\begin{align}
d_{\mathcal{F}^1}(\hbnu,\hbmu)=\sup_{f\in\mathcal{F}^1}\mathbb{E} f(\hat{g}(z),z)-\mathbb{E} f(x,\hat{e}(x))
\label{eq:2.4}
\end{align}
The first term above can be decomposed as
\begin{align}
  \mathbb{E} f(\hat{g}(z),z)&= \mathbb{E} f(\hat{g}(z),z)\mathbbm{1}_{\|z\|\le\log n}+\mathbb{E} f(\hat{g}(z),z)\mathbbm{1}_{\|z\|>\log n}
  \label{eq:2.5}
\end{align}
For any $f\in \mathcal{F}^1$ and fixed point $z_0$ such that $\|z_0\|\le \log n$, due to the Lipschitzness of $f$, the second term above satisfies
\begin{align*}
    |\mathbb{E} f(\hat{g}(z),z)\mathbbm{1}_{\|z\|>\log n}|&\le |\mathbb{E} f(\hat{g}(z),z)\mathbbm{1}_{\|z\|>\log n}-\mathbb{E} f(\hat{g}(z_0),z_0)\mathbbm{1}_{\|z\|>\log n}|\\
    &+|\mathbb{E} f(\hat{g}(z_0),z_0)\mathbbm{1}_{\|z\|>\log n}|\\
    \le & \mathbb{E}\|(\hat{g}(z)-\hat{g}(z_0),z-z_0) \|\mathbbm{1}_{\|z\|>\log n}+B  P_{\nu}(\|z\|>\log n)\\
    \le & \mathbb{E}(\|(\hat{g}(z)-\hat{g}(z_0)\|+ \|z-z_0 \|)\mathbbm{1}_{\|z\|>\log n}+B  P_{\nu}(\|z\|>\log n)\\
    \le & 2(\log n) P_{\nu}(\|z\|>\log n)+ \mathbb{E}\|z-z_0 \|\mathbbm{1}_{\|z\|>\log n}+B  P_{\nu}(\|z\|>\log n)\\
    = & O(n^{-\frac{(\log n)^{\delta}}{d}})
\end{align*}
where the second inequality is due to lipschitzness and boundedness of $f$, and the last inequality is due to Assumption \ref{asp1}, \eqref{eq:2.3}, and the boundedness condition of $\hat{g}$.
In the first term in \eqref{eq:2.5}, $f$ only acts on the increasing $L_2$ ball $B^d_2(\sqrt{2}\log n)$ because of Condition \ref{cond:1} and the indicator function $\mathbbm{1}_{\{\|z\|\leq \log n\}}$. Similarly, we can apply the same procedure to the second term in \eqref{eq:2.4}. Therefore, it is still an equivalent problem if we restrict the domain of $\mathcal{F}^1$ on $B^d_2(\sqrt{2}\log n)$.
Hence, in order to prove the estimation error rate in Theorem \ref{thm1},
we only need to show that for the restricted evaluation function class $\mathcal{F}^1_n$, we have
\begin{align*}
    \mathbb{E}d_{\mathcal{F}_n^1}(\hbnu,\hbmu)\leq C_0 \sqrt{d}n^{-\frac{1}{d+1}}(\log n)^{1+\frac{1}{d+1}}\wedge C_d n^{-\frac{1}{d+1}}\log n
\end{align*}

Due to this fact, to keep notation simple,
we are going to denote $\mathcal{F}_n^1$ as $\mathcal{F}^1$ in the following sections.
\begin{remark}
The restriction on $\mathcal{F}^1$ is technically necessary for calculating the covering number of $\mathcal{F}^1$ later we will see 
the use of it when bounding the stochastic error $\mathcal{E}_3$ and $\mathcal{E}_4$ below.
\end{remark}

\section{Stochastic errors}

\subsection{Bounding $\mathcal{E}_3$ and $\mathcal{E}_4$} \label{sec43}
The stochastic errors $\mathcal{E}_3$  and $\mathcal{E}_4$
quantify how close the empirical distributions and the true latent joint distribution (data joint distribution) are with the Lipschitz class $\mathcal{F}^1$ as the evaluation class under IPM. We apply the results in Lemma \ref{lma7} to bound $\mathcal{E}_3$ and $\mathcal{E}_4$.
We introduce two methods to bound $\max\{\mathcal{E}_3, \mathcal{E}_4\}$, which gives two different upper bounds for $\max\{\mathcal{E}_3, \mathcal{E}_4\}$. They both utilize the following lemma, which we shall prove later. 
More detailed description about the refined Dudley inequality can be found in  \citet{srebro2010note} and \citet{schreuder2020bounding}.

\begin{lemma}[Refined Dudley Inequality]\label{lma7}
For a symmetric function class $\mathcal{F}$ with \\
$\sup_{f\in\mathcal{F}} \|f\|_{\infty}\leq M$, we have
\begin{align*}
    \mathbb{E}[d_{\mathcal{F}}(\hat{\mu}_n,\mu)]\leq \inf_{0<\delta<M}\left(4\delta+\frac{12}{\sqrt{n}}\int_{\delta}^M\sqrt{\log \mathcal{N}(\epsilon,\mathcal{F},\|\cdot\|_{\infty})}\, d\epsilon\right).
\end{align*}
\end{lemma}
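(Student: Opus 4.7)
The plan is to combine symmetrization with a dyadic chaining argument in the $\|\cdot\|_\infty$ metric, truncating the chain at scale $\delta$ and absorbing the tail into an $O(\delta)$ residual. The truncation is the essential refinement over the classical Dudley bound (which takes $\delta=0$ and so blows up whenever the entropy integral $\int_0\sqrt{\log\mathcal N(\epsilon)}\,d\epsilon$ diverges).

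First I would symmetrize. Introducing an independent ghost sample $X_1',\dots,X_n'\stackrel{\mathrm{iid}}{\sim}\mu$ and i.i.d.\ Rademacher signs $\epsilon_1,\dots,\epsilon_n$, the standard Jensen-and-swap argument gives
\[
\mathbb{E}\,d_{\mathcal F}(\hat{\mu}_n,\mu)\;\le\;2\,\mathbb{E}\sup_{f\in\mathcal F}\frac1n\sum_{i=1}^n\epsilon_i f(X_i),
\]
reducing the task to controlling the expected Rademacher complexity of $\mathcal F$. Next I would run dyadic chaining. Fix $\delta\in(0,M)$, set $\delta_j=M\cdot 2^{-j}$, let $J$ be the first index with $\delta_J\le\delta$, and for each $j$ pick a minimal $\delta_j$-cover $N_j$ of $\mathcal F$ in $\|\cdot\|_\infty$, taking $N_0=\{0\}$ (legitimate because $\|f\|_\infty\le M$). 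Denote by $\pi_j(f)\in N_j$ a nearest element and telescope
\[
f\;=\;\sum_{j=1}^{J}\bigl(\pi_j(f)-\pi_{j-1}(f)\bigr)+\bigl(f-\pi_J(f)\bigr).
\]
The residual $f-\pi_J(f)$ satisfies $\|f-\pi_J(f)\|_\infty\le\delta_J\le\delta$ and so contributes at most $\delta$ to the supremum. Each telescoping difference takes at most $|N_j||N_{j-1}|\le|N_j|^2$ distinct values and is $\|\cdot\|_\infty$-bounded by $3\delta_j$, so Massart's finite-class maximal inequality bounds its Rademacher expectation at level $j$ by a numeric multiple of $\delta_j\sqrt{\log\mathcal N(\delta_j,\mathcal F,\|\cdot\|_\infty)/n}$.

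Summing the level-wise bounds and converting the dyadic sum to an integral via $\delta_j=2(\delta_j-\delta_{j+1})$ together with monotonicity of the covering number yields a term of the form $\frac{C_1}{\sqrt n}\int_\delta^M\sqrt{\log\mathcal N(\epsilon,\mathcal F,\|\cdot\|_\infty)}\,d\epsilon$. Adding the residual, multiplying by the symmetrization factor, and finally taking $\inf_{\delta\in(0,M)}$ produces the claimed inequality. Nothing in this plan is conceptually obstructive — symmetrization, Massart's inequality, dyadic chaining, and the Riemann-sum comparison are all textbook tools. The only mildly tedious aspect is the constant-tracking needed to land exactly on the $(4,12)$ pair stated above: this amounts to careful bookkeeping of the factor $3$ from the $\|\cdot\|_\infty$-diameter of the telescoping increments, the $\sqrt{2\log|N_j|^2}$ from Massart, the factor $2$ introduced by the dyadic-to-Riemann comparison, and the symmetrization factor $2$.
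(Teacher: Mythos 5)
Your proposal follows exactly the same route as the paper's proof in Appendix~E: symmetrization to reduce to the expected Rademacher complexity, dyadic chaining with levels $\delta_j = M 2^{-j}$ truncated at the first $J$ with $\delta_J \le \delta$, Massart's finite-class maximal inequality applied to the telescoping increments (bounded in norm by $3\delta_j$ and of cardinality at most $|N_j|\,|N_{j-1}|$), the identity $\delta_j = 2(\delta_j - \delta_{j+1})$ to compare the dyadic sum with the entropy integral, and finally an infimum over $\delta$. The only cosmetic difference is that you chain directly in $\|\cdot\|_\infty$ while the paper chains in the empirical $L_2(P_n)$ norm and upper-bounds the resulting covering number by $\mathcal{N}(\epsilon,\mathcal{F},\|\cdot\|_\infty)$ at the end; both choices are legitimate and lead to the same bound.
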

\begin{remark}
The original Dudley inequality ~\citep{dudley, wellner} suffers from the problem that if the covering number $\mathcal{N}(\epsilon, \mathcal{F}, \|\cdot\|_{\infty})$ increases too fast as $\epsilon$ goes to $0$, then the upper bound can be infinite. 
The improved Dudley inequality circumvents this problem by only allowing $\epsilon$ to integrate from $\delta>0$, 
which also indicates that $\mathbb{E}\mathcal{E}_3$ scales with the covering number $\mathcal{N}(\epsilon,\mathcal{F}^1, \|\cdot\|_{\infty})$.
\end{remark}
\subsubsection{The first method (explicit constant)}\label{sec431}
The first method provides an explicit constant depending on $d$ at the expense of the higher order of $\log n$ in the upper bounds. It utilizes the next lemma
\citep[Lemma 6]{gottlieb2013efficient}, which turns the problem of bounding the covering number of a Lipschitz function class into the one bounding the covering number of the domain defined for the function class.

\begin{lemma}[\citet{gottlieb2013efficient}]\label{lma8}
Let $\mathcal{F}^L$ be the collection of $L-$Lipschitz functions mapping the metric space $(\mathcal{X},\rho)$ to $[0,1]$. Then the covering number of $\mathcal{F}^L$ can be estimated in terms of the covering number of $\mathcal{X}$ with respect to $\rho$ as follows.

\begin{align*}
    \mathcal{N}(\epsilon,\mathcal{F}^L,\|\cdot\|_{\infty})\leq (\frac{8}{\epsilon})^{\mathcal{N}(\epsilon/8L,\mathcal{X},\rho)}.
\end{align*}
\end{lemma}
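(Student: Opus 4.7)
The plan is to exploit the Lipschitz constraint to reduce the $\|\cdot\|_\infty$-covering of $\mathcal{F}^L$ to a pointwise discretization of function values on a finite net of $\mathcal{X}$. Intuitively, an $L$-Lipschitz $[0,1]$-valued function is determined up to small error by its values on a fine net of $\mathcal{X}$, and each of those values lives in a compact interval that can itself be quantized on a coarse scale, so the number of joint choices is at most (range-grid)$^{\text{domain-net}}$.

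Concretely, I would first fix an $\epsilon/(8L)$-net $\{x_1,\dots,x_N\}$ of $\mathcal{X}$ of size $N=\mathcal{N}(\epsilon/(8L),\mathcal{X},\rho)$, together with any assignment $\pi:\mathcal{X}\to\{x_1,\dots,x_N\}$ satisfying $\rho(x,\pi(x))\le\epsilon/(8L)$. Next I let $\Delta=\{k\epsilon/4:k=0,1,\dots,\lfloor 4/\epsilon\rfloor\}\cap[0,1]$, a grid of cardinality $|\Delta|\le 8/\epsilon$ that is an $\epsilon/8$-cover of $[0,1]$. For each tuple $v=(v_1,\dots,v_N)\in\Delta^N$, I associate a candidate covering function $\hat f_v(x):=v_i$, where $i$ is chosen so that $\pi(x)=x_i$. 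This yields at most $|\Delta|^N\le(8/\epsilon)^N$ candidate functions.

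To verify the covering property, given any $f\in\mathcal{F}^L$, I pick $v_i\in\Delta$ with $|v_i-f(x_i)|\le\epsilon/8$ for each $i$. Then for every $x\in\mathcal{X}$,
\[
|f(x)-\hat f_v(x)|\le|f(x)-f(\pi(x))|+|f(\pi(x))-v_i|\le L\cdot\tfrac{\epsilon}{8L}+\tfrac{\epsilon}{8}=\tfrac{\epsilon}{4}\le\epsilon,
\]
where the first bound uses $L$-Lipschitzness together with the net property, and the second is the range quantization error. Therefore the $(8/\epsilon)^N$ functions $\{\hat f_v\}$ form an $\epsilon$-cover of $\mathcal{F}^L$ in $\|\cdot\|_\infty$, giving the stated inequality.

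There is no substantive obstacle here; this is the standard ``discretize the graph'' argument and the essential thing to monitor is the constant budget, so that neither the Lipschitz-induced error $L\rho(x,\pi(x))$ nor the range-discretization error exceeds $\epsilon/8$, which in turn forces the $\mathcal{X}$-net to be at scale $\epsilon/(8L)$ and the $[0,1]$-grid at spacing $\epsilon/4$. One minor convention to check is whether covering centers are required to lie in $\mathcal{F}^L$ itself (the $\hat f_v$ are piecewise constant and generally not Lipschitz); under the covering-number definition used in Lemma~\ref{lma7} this is permitted, so no further modification is needed.
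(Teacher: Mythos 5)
Your proof is correct. The paper imports this lemma from \citet{gottlieb2013efficient} without reproducing a proof, and your discretize-the-domain-net-and-quantize-the-range construction is the standard argument behind that reference; the budget is comfortably met (you in fact produce an $\epsilon/4$-cover of size $(8/\epsilon)^N$, which also absorbs the internal-vs-external cover convention you rightly flag, since an external $\epsilon/4$-cover yields a proper $\epsilon/2$-cover of no larger cardinality).
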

Now we apply Lemma \ref{lma8} to bound the covering number for the 1-Lipschitz class $\mathcal{N}(\epsilon,\mathcal{F}^1, \|\cdot\|_{\infty})$ by bounding the covering number for its domain  $\mathcal{N}(\epsilon,B^{d+1}_2(\sqrt{2}\log n), \|\cdot\|_{2})$. Define a new function class $\mathcal{F}^{\frac{1}{2B}}$ as
\begin{align*}
    \mathcal{F}^{\frac{1}{2B}}:=\{\frac{f+B}{2B}:f\in \mathcal{F}^1\}.
\end{align*}
Recall that $\mathcal{F}^1$ is restricted on $B^{d+1}_2(\sqrt{2}\log n)$. Obviously, $\mathcal{F}^{\frac{1}{2B}}$ is a $\frac{1}{2B}-$Lipschitz function class : $B^{d+1}_2(\sqrt{2}\log n)\mapsto [0,1]$. A direct application of Lemma \ref{lma8} shows that
\begin{align}
\mathcal{N}(\epsilon,\mathcal{F}^{\frac{1}{2B}}, \|\cdot\|_{\infty})\leq \left(\frac{8}{\epsilon}\right)^{\mathcal{N}(\epsilon B/4, B_2^{d+1}(\sqrt{2}\log n),\|\cdot\|_2)}.
 \label{eq:4.2}
\end{align}
By the definition of $\mathcal{F}^{\frac{1}{2B}}$, the covering numbers satisfy
\begin{align}
\mathcal{N}(2B\epsilon,\mathcal{F}^1,\|\cdot\|_{\infty})
=\mathcal{N}(\epsilon,\mathcal{F}^{\frac{1}{2B}}, \|\cdot\|_{\infty}).
\label{eq:4.3}
\end{align}
Note that $B^{d+1}_2(\sqrt{2}\log n)$ is a subset of $[-\sqrt{2}\log n,\sqrt{2}\log n]^d$, and  $[-\sqrt{2}\log n,\sqrt{2}\log n]^d$ can be covered with finite $\epsilon$-balls in $\mathbb{R}^d$ that cover the small hypercube with side length $2\epsilon/\sqrt{d}$.
It follows that
\begin{align}
\mathcal{N}(\epsilon, B^{d+1}_2(\sqrt{2}\log n), \|\cdot\|_2)\leq \left(\frac{\sqrt{2(d+1)}\log n}{\epsilon}\right)^{d+1}.
\label{eq:4.4}
\end{align}
Combining \eqref{eq:4.2}, \eqref{eq:4.3} and  \eqref{eq:4.4}, we obtain an upper bound for the covering number of the 1-Lipschitz class
$\mathcal{F}^1$
\begin{align}
\log \mathcal{N}(\epsilon,\mathcal{F}^1, \|\cdot\|_{\infty})\leq \left(\frac{8\sqrt{2(d+1)}\log n}{\epsilon}\right)^{d+1}\log \frac{16 B}{\epsilon}.
\label{eq:4.5}
\end{align}
With the upper bound for the covering entropy in \eqref{eq:4.5}, a direct application of Lemma \ref{lma7} (see Section \ref{ap:d} for details) by taking $\delta=8\sqrt{2(d+1)} n^{-\frac{1}{d+1}}(\log n)^{1+\frac{1}{d+1}}$ leads to
\begin{align}
\max\{\mathbb{E}\mathcal{E}_3,\mathbb{E}\mathcal{E}_4\}&=O \left( \sqrt{d}n^{-\frac{1}{d+1}}(\log n)^{1+\frac{1}{d+1}} + n^{-\frac{1}{d+1}}(\log n)^{1+\frac{1}{d+1}}\right)\\
& = O\left( \sqrt{d} n^{-\frac{1}{d+1}}(\log n)^{1+\frac{1}{d+1}}\right).
\label{eq:4.6a}
\end{align}

\subsubsection{The second method (better order of $\log n$)}\label{sec432}
We now consider the second method that leads to a better order for the $\log n$ term in the upper bound at the expense of explicitness of the constant related to $d$.  The next lemma directly provides an upper bound for the covering number of Lipschitz class but with an implicit constant related to $d$. It is a straightforward corollary of \citet[Theorem 2.7.1]{wellner}.

\begin{lemma}\label{lma9}
 Let $\mathcal{X}$ be a bounded, convex subset of $\mathbb{R}^d$ with nonempty interior. There exists a constant $c_d$ depending only on $d$ such that
 \begin{align*}
     \log \mathcal{N}(\epsilon, \mathcal{F}^1(\mathcal{X}),\|\cdot\|_{\infty})\leq c_d \lambda(\mathcal{X}^1)\left(\frac{1}{\epsilon}\right)^{d}
 \end{align*}
for every $\epsilon>0$, where $\mathcal{F}^1(\mathcal{X})$ is the 1-Lipschitz function class defined on $\mathcal{X}$, and $\lambda(\mathcal{X}^1)$ is the Lebesgue measure of the set $\{x:\|x-\mathcal{X}\|<1\}$.
\end{lemma}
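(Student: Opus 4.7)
The plan is to obtain Lemma \ref{lma9} as a direct specialization of Theorem 2.7.1 in \citet{wellner}, which bounds covering numbers for H\"older classes on bounded, convex subsets of $\mathbb{R}^d$. That theorem states that for the class $C_1^\alpha(\mathcal{X})$ of functions with H\"older norm at most $1$ one has
\[
\log \mathcal{N}(\epsilon, C_1^\alpha(\mathcal{X}), \|\cdot\|_{\infty}) \leq K \lambda(\mathcal{X}^1)\left(\frac{1}{\epsilon}\right)^{d/\alpha},
\]
with a constant $K$ depending only on $\alpha$ and $d$; the Lebesgue measure is taken of the $1$-enlargement $\mathcal{X}^1=\{x:\|x-\mathcal{X}\|<1\}$ because the underlying construction covers this enlargement with a fine grid rather than $\mathcal{X}$ itself. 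Taking $\alpha=1$ specializes this class to the $1$-Lipschitz functions bounded by $1$, and the exponent $d/\alpha$ collapses to $d$, producing exactly the shape of bound asserted in the lemma.

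The one small adjustment to make is that $\mathcal{F}^1(\mathcal{X})$ in the lemma is the unnormalized $1$-Lipschitz class, whereas $C_1^1(\mathcal{X})$ in the vdV--Wellner statement also carries a unit sup-norm constraint. Since $\mathcal{X}$ is bounded, I would first anchor each $f\in\mathcal{F}^1(\mathcal{X})$ at a fixed point $x_0\in\mathcal{X}$ via $f=(f-f(x_0))+f(x_0)$; the Lipschitz part $f-f(x_0)$ then has sup-norm at most $\mathrm{diam}(\mathcal{X})$, and after dividing by that diameter it lies in $C_1^1(\mathcal{X})$. Covering this rescaled class at level $\epsilon/\mathrm{diam}(\mathcal{X})$ produces an $\epsilon$-cover of the anchored class, contributing the factor $\mathrm{diam}(\mathcal{X})^d$ that can be absorbed into $c_d$. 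The additive scalar $f(x_0)$ can be absorbed either into an implicit sup-norm bound (as in the paper's $\mathcal{F}^1$ with $\|f\|_\infty\leq B$) or, in the intended IPM application, neglected because constants cancel under differences of expectations.

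The bulk of the argument is therefore inherited from the vdV--Wellner construction: partition the enlargement $\mathcal{X}^1$ into cubes of side $\epsilon$, approximate each Lipschitz function by the piecewise-constant function matching its values on the grid (rounded to the nearest multiple of $\epsilon$), and count the admissible grid value assignments subject to the Lipschitz compatibility constraint between adjacent cubes. The count is of order $\exp(c_d \lambda(\mathcal{X}^1)/\epsilon^d)$, matching the claim. My write-up would consist of citing Theorem 2.7.1 with $\alpha=1$, performing the anchoring/rescaling reduction above, and collecting numerical factors into the single constant $c_d$. The only real obstacle is bookkeeping, specifically reconciling the absence of a sup-norm bound in $\mathcal{F}^1(\mathcal{X})$ with the normalization in the reference statement; this is resolved by the anchoring step and does not require any new ideas.
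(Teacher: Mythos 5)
Your overall route is the same as the paper's: Lemma~\ref{lma9} is obtained by specializing van der Vaart--Wellner Theorem~2.7.1 to $\alpha=1$, and the paper itself offers no more detail than the citation. However, the reduction you sketch to reconcile $\mathcal{F}^1(\mathcal{X})$ with the normalized class $C_1^1(\mathcal{X})$ has a bookkeeping flaw that actually changes the content of the bound. In your anchoring-and-rescaling step you set $g=(f-f(x_0))/\mathrm{diam}(\mathcal{X})$ and cover the rescaled class at level $\epsilon/\mathrm{diam}(\mathcal{X})$; Theorem~2.7.1 then returns $\log N\le K\,\lambda(\mathcal{X}^1)\,(\mathrm{diam}(\mathcal{X})/\epsilon)^d$, which carries a $\mathrm{diam}(\mathcal{X})^d$ factor. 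You propose to ``absorb'' that factor into $c_d$, but this is not allowed: the lemma requires $c_d$ to depend only on $d$, not on $\mathcal{X}$. In the paper's application $\mathcal{X}=B_2^{d+1}(\sqrt{2}\log n)$ has diameter of order $\log n$, so the spurious factor inflates the eventual covering-entropy bound from $c_d(\log n/\epsilon)^{d+1}$ to order $(\log n)^{2(d+1)}/\epsilon^{d+1}$ and degrades the final rate's logarithmic factor.

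The clean fix is the alternative you mention only parenthetically: use the explicit sup-norm bound $\|f\|_\infty\le B$ that is already part of the definition of $\mathcal{F}^1$ in~\eqref{eq:2.2}. Any $f\in\mathcal{F}^1(\mathcal{X})$ then lies in $C_M^1(\mathcal{X})$ with $M=B+1$ (sup-norm plus Lipschitz constant), and Theorem~2.7.1 directly gives $\log N(\epsilon,\mathcal{F}^1(\mathcal{X}),\|\cdot\|_\infty)\le K\lambda(\mathcal{X}^1)((B+1)/\epsilon)^d$, so $c_d:=K(B+1)^d$ is a genuine constant. No anchoring or diameter-dependent rescaling is needed, and without the sup-norm bound the lemma would in fact be false (adding arbitrary constants to a $1$-Lipschitz function yields an uncoverable family), so the bound $B$ is not optional. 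Replace the anchoring step with this direct inclusion and the rest of your write-up is fine.
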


Applying Lemmas \ref{lma7} and \ref{lma9} (see Section \ref{ap:d} for details) by taking $\delta=n^{-\frac{1}{d+1}}\log n$ yields
\begin{align}
    \max\{\mathbb{E}\mathcal{E}_3,\mathbb{E}\mathcal{E}_4\}&= O\left( C_d n^{-\frac{1}{d+1}}\log n\right),
  \label{eq:4.6b}
\end{align}
where $C_d$ is some constant depending on $d$. Combining (\ref{eq:4.6a}) and (\ref{eq:4.6b}),
we get
\begin{align}
    \max\{\mathbb{E}\mathcal{E}_3,\mathbb{E}\mathcal{E}_4\}&= O\left( C_d n^{-\frac{1}{d+1}}\log n\wedge \sqrt{d} n^{-\frac{1}{d+1}}(\log n)^{1+\frac{1}{d+1}}\right).
\label{eq:4.6}
\end{align}
\begin{remark}
Here, we have a tradeoff between the logarithmic factor $\log n$ and the explicitness of the constant depending on $d$. If we want an explicit constant depending on $d$, then we have the
factor $(\log n)^{1+\frac{1}{d+1}}$ in the upper bound. Later we will see that $\mathbb{E}\mathcal{E}_3$ and $\mathbb{E}\mathcal{E}_4$ are the dominating terms in the four error terms, hence the explicitness of the corresponding constant becomes
important. Therefore, we list two different methods here to bound $\mathbb{E}\mathcal{E}_3$ and $\mathbb{E}\mathcal{E}_4$.
\end{remark}

\subsection{Combination of the four error terms}\label{sec5}
With all the upper bounds for the four different error terms obtained above, next  we consider $\mathcal{E}_1$-$\mathcal{E}_4$ simultaneously to obtain an overall convergence rate.
First, recall how we bound $\mathcal{E}_1$ and $\mathcal{E}_4$. With Lemma \ref{lma5},  we have
\begin{align}
    \mathcal{E}_1=O\left( \sqrt{d} (W_1 L_1)^{-\frac{2}{d+1}}\log n\right).
\label{eq:5.1}
\end{align}
To control $\mathcal{E}_1$ while keeping the architecture of discriminator class $\mathcal{F}_{NN}$ as small as possible, we let $W_1 L_1=\ceil*{\sqrt{n}}$, so that $\mathcal{E}_1= O\left( \sqrt{d} n^{-\frac{1}{d+1}}\log n\right)$ dominated by $\mathcal{E}_3$ and $\mathcal{E}_4$.

By Theorem \ref{lma6}, we can choose the architectures of generator and encoder classes accordingly to perfectly control $\mathcal{E}_2$, i.e. $\mathcal{E}_2=0$.

We note that because we imposed Condition \ref{cond:1} on both generator and encoder classes, Theorem \ref{lma6} can not be applied if we have some $\|x_i\|$ or $\|z_i\|$ greater than $\log n$, in which case $\mathcal{E}_2$ can not be perfectly controlled. But we can still handle this case by considering the probability of the bad set.

Under Condition \ref{cond:1}, on the nice set $A:=\{\max_{1\leq i\leq n}\|x_i\|\leq \log n\}\cap \{\max_{1\leq i\leq n}\|z_i\|\leq \log n\}$, we have $\mathcal{E}_2=0$. Probability of the nice set $A$ has the following lower bound.
\begin{align*}
    P(A)&=P_{\mu}(||x_i||\leq \log n)^n\cdot P_{\nu}(||z_i||\leq \log n)^n\\
    & \geq  (1-C n^{-\frac{(\log n)^{\delta}}{d}})^{2n}, \  \text{ for some constant $C>0$ by Assumption \ref{asp1}}\\
    & \geq 1-C n^{-\frac{(\log n)^{\delta}}{d}}\cdot (2n), \ \text{  for large $n$.}
\end{align*}
The bad set $A^c$ is where  $\mathcal{E}_2> 0$, which has the probability upper bound as follows.
\begin{align*}
   P(A^c)&\leq C n^{-\frac{(\log n)^{\delta}}{d}}\cdot (2n)\\
   &=O\left( n^{-\frac{(\log n)^{\delta'}}{d}}\right) \text{, for any $\delta'<\delta$}.
\end{align*}
In Assumption \ref{asp1}, the $(\log n)^{\delta}$ factor was to make the tail of the target $\mu$ strictly subexponential, which leads to
$P(A^c)\to 0$, while the exponential tail or heavier will cause the undesired result $P(A^c)\to 1$.

Now we are ready to obtain the desired result in Theorem \ref{thm1}. The nice set $A=\{\max_{1\leq i\leq n}\|x_i\|\leq \log n\}\cap \{\max_{1\leq i\leq n}\|z_i\|\leq \log n\}$ is where $\mathcal{E}_2=0$.
By combining the results discussed above, we have
\begin{align*}
    \mathbb{E}d_{\mathcal{F}^1}(\hbnu,\hbmu)&= 2\mathcal{E}_1+\mathcal{E}_2\mathbbm{1}_{A}+\mathcal{E}_2\mathbbm{1}_{A^c}+\mathbb{E}\mathcal{E}_3+\mathbb{E}\mathcal{E}_4
    \\
    &\le O\left(\sqrt{d} n^{-\frac{1}{d+1}}\log n+0+2B P_{\mu}(A^c)+\sqrt{d} n^{-\frac{1}{d+1}}(\log n)^{1+\frac{1}{d+1}}\wedge C_d n^{-\frac{1}{d+1}}\log n\right)\\
    &= O\left(\sqrt{d} n^{-\frac{1}{d+1}}(\log n)^{1+\frac{1}{d+1}}\wedge C_d n^{-\frac{1}{d+1}}\log n+ n^{-\frac{(\log n)^{\delta'}}{d}}\right)\\
    &= O\left(\sqrt{d} n^{-\frac{1}{d+1}}(\log n)^{1+\frac{1}{d+1}}\wedge C_d n^{-\frac{1}{d+1}}\log n\right),
\end{align*}
which completes the proof of Theorem \ref{thm1}.

\section{Proof of Inequality (\ref{dIPM}) }
\label{ap:a}


For ease of reference, we restate inequality (\ref{dIPM}) as the following lemma.

\begin{customlemma}{4.2}\label{lma17}
For any symmetric function classes $\mathcal{F}$ and $\mathcal{H}$, denote the approximation error $\mathcal{E}(\mathcal{H},\mathcal{F})$ as
\begin{align*}
    \mathcal{E}(\mathcal{H},\mathcal{F}):=
    \underset{h\in\mathcal{H}}{\sup}\underset{f\in\mathcal{F}}{\inf}\|h-f\|_{\infty},
\end{align*}
then for any probability distributions $\mu$ and $\nu$,
\begin{align*}
    d_{\mathcal{H}}(\mu,\nu)-d_{\mathcal{F}}(\mu,\nu)\leq 2\mathcal{E}(\mathcal{H},\mathcal{F}).
\end{align*}
\end{customlemma}

\begin{proof}[Proof of Lemma \ref{lma17}]
By the definition of supremum, for any $\epsilon>0$, there exists $h_{\epsilon}\in\mathcal{H}$ such that
\begin{align*}
    d_{\mathcal{H}}(\mu,\nu):&=\underset{h\in\mathcal{H}}{\sup}[\mathbb{E}_{\mu}h-\mathbb{E}_{\nu}h]\\
    &\leq \mathbb{E}_{\mu}h_{\epsilon}-\mathbb{E}_{\nu}h_{\epsilon}+\epsilon\\
    &=\underset{f\in\mathcal{F}}{\inf}[\mathbb{E}_{\mu}(h_{\epsilon}-f)-\mathbb{E}_{\nu}(h_{\epsilon}-f)+\mathbb{E}_{\mu}(f)-\mathbb{E}_{\nu}(f)]+\epsilon\\
    &\leq 2\underset{f\in \mathcal{F}}{\inf}\|h_{\epsilon}-f\|_{\infty}+d_{\mathcal{F}}(\mu,\nu)+\epsilon\\
    &\leq 2\mathcal{E}(\mathcal{H},\mathcal{F})+d_{\mathcal{F}}(\mu,\nu)+\epsilon,
\end{align*}
where the last line is due to the definition of $\mathcal{E}(\mathcal{H},\mathcal{F})$.
\end{proof}

It is easy to check that if we replace $d_{\mathcal{H}}(\mu,\nu)$ by $\hat{d}_{\mathcal{H}}(\mu,\nu):=\underset{h\in\mathcal{H}}{\sup}[\hat{\mathbb{E}}_{\mu}h-\hat{\mathbb{E}}_{\nu}h]$, Lemma \ref{lma17} still holds.

\section{Bounding $\mathbb{E}\mathcal{E}_3$ and $\mathbb{E}\mathcal{E}_4$}\label{ap:d}
\subsection{Method One}
With the upper bound for the covering entropy \eqref{eq:4.5}, i.e.
\begin{align*}
    \log \mathcal{N}(\epsilon,\mathcal{F}^1, \|\cdot\|_{\infty})\leq \left(\frac{8\sqrt{2(d+1)}\log n}{\epsilon}\right)^{d+1}\log \frac{16 B}{\epsilon}
\end{align*}
and $\delta=8\sqrt{2(d+1)} n^{-\frac{1}{d+1}}(\log n)^{1+\frac{1}{d+1}}$, applying Lemma \ref{lma7} we have
\begin{align*}
    \mathbb{E}\mathcal{E}_3&=O\left(\delta+n^{-\frac{1}{2}}\int_{\delta}^B \left(\frac{8\sqrt{2(d+1)}\log n}{\epsilon}\right)^{\frac{d+1}{2}}\left(\log\frac{16B}{\epsilon}\right)^{\frac{1}{2}}d\epsilon\right)\\
    & = O\left(\delta+n^{-\frac{1}{2}}(8\sqrt{2(d+1)}\log n)^{\frac{d+1}{2}}(\frac{\log n}{d+1})^{\frac{1}{2}}\delta^{1-\frac{d+1}{2}}\right)\\
    & = O\left(\sqrt{d} n^{-\frac{1}{d+1}}(\log n)^{1+\frac{1}{d+1}}+ n^{-\frac{1}{d+1}}(\log n)^{1+\frac{1}{d+1}}\right)\\
    &= O\left(\sqrt{d} n^{-\frac{1}{d+1}}(\log n)^{1+\frac{1}{d+1}}\right),
\end{align*}
where the second equality is due to
\begin{align*}
    \log \frac{16B}{\epsilon}&=O\left(\log \frac{1}{\epsilon}\right)=O\left(\log \left(\frac{n^{\frac{1}{d+1}}}{8\sqrt{2(d+1)}(\log n)^{1+\frac{1}{d+1}}}\right)\right)=O\left(\log n^{\frac{1}{d+1}}\right),
\end{align*}
and the third equality follows from simple algebra.

\subsection{Method Two}

By Lemma \ref{lma9}, we have
\begin{align*}
    \log \mathcal{N}(\epsilon,\mathcal{F}^1,\|\cdot\|_{\infty})\leq c_d\left(\frac{\log n}{\epsilon}\right)^{d+1}.
\end{align*}
Taking $\delta=n^{-\frac{1}{d+1}}\log n$ and applying Lemma \ref{lma7}, we obtain
\begin{align*}
    \mathbb{E}\mathcal{E}_3&=O\left(\delta+(\frac{c_d}{n})^{\frac{1}{2}}(\log n)^{\frac{d+1}{2}}\int_{\delta}^M (\frac{1}{\epsilon})^{\frac{d+1}{2}} d\epsilon \right)\\
    &= \tilde{O}\left(\delta+n^{-\frac{1}{2}}(\log n)^{\frac{d+1}{2}}\delta^{1-\frac{d+1}{2}}\right)\\
    &= \tilde{O}\left(n^{-\frac{1}{d+1}}\log n\right),
\end{align*}
where $\tilde{O}(\cdot)$ omitted the constant related to $d$.

\section{Proof of Lemma \ref{lma7}}\label{ap:e}
For completeness we provide a proof of the refined Dudley's inequality in Lemma \ref{lma7}.
We apply the standard symmetrization and chaining technics in the proof, see, for example,  \cite{wellner}.
\begin{proof}
Let $Y_1,\ldots,Y_n$ be random samples from $\mu$ which are independent of $X_i's$. Then we have
\begin{align*}
\mathbb{E}d_{\mathcal{F}}(\hat{\mu}_n,\mu)&=\mathbb{E}\underset{f\in\mathcal{F}}{\sup}[\frac{1}{n}\sum_{i=1}^n f(X_i)-\mathbb{E}f(X_i)]\\
&=\mathbb{E}\underset{f\in\mathcal{F}}{\sup}[\frac{1}{n}\sum_{i=1}^n f(X_i)-\mathbb{E}\frac{1}{n}\sum_{i=1}^n f(Y_i)]\\
&\leq \mathbb{E}_{X, Y}\underset{f\in\mathcal{F}}{\sup}[\frac{1}{n}\sum_{i=1}^n f(X_i)-\frac{1}{n}\sum_{i=1}^n f(Y_i)]\\
&=\mathbb{E}_{X, Y}\underset{f\in\mathcal{F}}{\sup}[\frac{1}{n}\sum_{i=1}^n \epsilon_i(f(X_i)-f(Y_i))]\\
&\leq 2\mathbb{E} \hat{\mathcal{R}}_n(\mathcal{F})
\end{align*}
where the first inequality is due to Jensen inequality, and the third equality is because that $(f(X_i)-f(Y_i))$ has symmetric distribution.

Let $\alpha_0=M$ and for any $j\in \mathbb{N}_+$ let $\alpha_j=2^{-j}M$. For each $j$, let $T_i$ be a $\alpha_i$-cover of $\mathcal{F}$ w.r.t. $L_2(P_n)$ such that $|T_i|=\mathcal{N}(\alpha_i,\mathcal{F},L_2(P_n))$. For each $f\in\mathcal{F}$ and $j$, pick a function $\hat{f}_i\in T_i$ such that $\|\hat{f}_i-f\|_{L_2(P_n)}<\alpha_i$. Let $\hat{f}_0=0$ and for any $N$, we can express $f$ by chaining as
\begin{align*}
    f=f-\hat{f}_N+\sum_{i=1}^N(\hat{f}_i-\hat{f}_{i-1}).
\end{align*}
Hence for any $N$, we can express the empirical Rademacher complexity as
\begin{align*}
    \hat{\mathcal{R}}_n(\mathcal{F})&=\frac{1}{n}\mathbb{E}_{\epsilon}\sup_{f\in\mathcal{F}}\sum_{i=1}^n \epsilon_i\left(f(X_i)-\hat{f}_N(X_i)+\sum_{j=1}^N(\hat{f}_j(X_i)-\hat{f}_{j-1}(X_i))\right)\\
    & \leq \frac{1}{n}\mathbb{E}_{\epsilon}\sup_{f\in\mathcal{F}}\sum_{i=1}^n \epsilon_i\left(f(X_i)-\hat{f}_N(X_i)\right) +\sum_{i=1}^n\frac{1}{n}\mathbb{E}_{\epsilon}\sup_{f\in\mathcal{F}} \sum_{j=1}^N\epsilon_i\left(\hat{f}_j(X_i)-\hat{f}_{j-1}(X_i)\right)\\
    & \leq \|\epsilon\|_{L_2(P_n)} \sup_{f\in\mathcal{F}}\|f-\hat{f}_N\|_{L_2(P_n)}+\sum_{i=1}^n\frac{1}{n}\mathbb{E}_{\epsilon}\sup_{f\in\mathcal{F}} \sum_{j=1}^N\epsilon_i\left(\hat{f}_j(X_i)-\hat{f}_{j-1}(X_i)\right)\\
    &\leq \alpha_N+\sum_{i=1}^n\frac{1}{n}\mathbb{E}_{\epsilon}\sup_{f\in\mathcal{F}} \sum_{j=1}^N\epsilon_i\left(\hat{f}_j(X_i)-\hat{f}_{j-1}(X_i)\right),
\end{align*}
where $\epsilon=(\epsilon_1,\ldots,\epsilon_n)$ and the second-to-last inequality is due to Cauchy–Schwarz. Now the second term is the summation of empirical Rademacher complexity w.r.t. the function classes $\{f'-f'': f'\in T_j, f''\in T_{j-1}\}$, $j=1,\ldots, N$. Note that
\begin{align*}
    \|\hat{f}_j-\hat{f}_{j-1}\|^2_{L_2(P_n)}&\leq \left(\|\hat{f}_j-f\|_{L_2(P_n)}+\|f-\hat{f}_{j-1}\|_{L_2(P_n)}\right)^2\\
    &\leq (\alpha_j+\alpha_{j-1})^2\\
    &=3 \alpha_j^2.
\end{align*}
Massart's lemma \cite[Theorem 3.7]{mohri2018foundations} states that if for any finite function class $\mathcal{F}$, $\sup_{f\in\mathcal{F}}\|f\|_{L_2(P_n)}\leq M $, then we have
\begin{align*}
\hat{\mathcal{R}}_n(\mathcal{F})\leq\sqrt{\frac{2 M^2\log (|\mathcal{F}|)}{n}}.
\end{align*}
Applying Massart's lemma to the function classes $\{f'-f'': f'\in T_j, f''\in T_{j-1}\}$, $j=1,\ldots, N$, we get that for any $N$,
\begin{align*}
    \hat{\mathcal{R}}_n(\mathcal{F})&\leq \alpha_N+\sum_{j=1}^N 3\alpha_j\sqrt{\frac{2\log (|T_j|\cdot|T_{j-1}|)}{n}}\\
    &\leq \alpha_N+6\sum_{j=1}^N\alpha_j\sqrt{\frac{\log (|T_j|)}{n}}\\
    &\leq \alpha_N+12\sum_{j=1}^N(\alpha_j-\alpha_{j+1})\sqrt{\frac{\log \mathcal{N}(\alpha_j,\mathcal{F}, L_2(P_n))}{n}}\\
    &\leq \alpha_N+12\int_{\alpha_{N+1}}^{\alpha_0}\sqrt{\frac{\log \mathcal{N}(r,\mathcal{F}, L_2(P_n))}{n}}dr,
\end{align*}
where the third inequality is due to $2(\alpha_j-\alpha_{j+1})=\alpha_j$. Now for any small $\delta>0$ we can choose $N$ such that $\alpha_{N+1}\leq \delta <\alpha_N$. Hence,
\begin{align*}
    \hat{\mathcal{R}}_n(\mathcal{F})&\leq 2\delta +12\int_{\delta/2}^{M}\sqrt{\frac{\log \mathcal{N}(r,\mathcal{F}, L_2(P_n))}{n}}dr.
\end{align*}
Since $\delta>0$ is arbitrary, we can take $\inf$ w.r.t. $\delta$ to get
\begin{align*}
    \hat{\mathcal{R}}_n(\mathcal{F})&\leq\inf_{0<\delta<M}\left( 4\delta +12\int_{\delta}^{M}\sqrt{\frac{\log \mathcal{N}(r,\mathcal{F}, L_2(P_n))}{n}}dr\right).
\end{align*}
The result follows due to the fact that
\begin{align*}
    \mathcal{N}(r,\mathcal{F}, L_2(P_n))\leq \mathcal{N}(\epsilon,\mathcal{F},L_{\infty}(P_n))\leq \mathcal{N}(\epsilon,\mathcal{F},\|\cdot\|_{\infty}).
\end{align*}
\end{proof}

\section{Proof of Theorem \ref{cor1}} \label{ap:f}

\begin{proof}
Taking $W_1 L_1=\ceil*{\sqrt{n}}$, \citet[Theorem 4.3]{shen} gives rise to $\mathcal{E}_1= O(\sqrt{d} n^{-\frac{1}{d+1}})$.
The range of $g$ and $e$ covers the supports of $\mu$ and $\nu$, respectively, hence Theorem \ref{lma6} leads to $\mathcal{E}_2=0$. By Lemma \ref{lma8}, we have
\begin{align*}
\log \mathcal{N}(\epsilon,\mathcal{F}^1, \|\cdot\|_{\infty})\leq \left(\frac{8\sqrt{2(d+1)}M}{\epsilon}\right)^{d+1}\log \frac{16 B}{\epsilon}.
\end{align*}
Now following the same procedure as in Section \ref{ap:d} by taking $\delta=8\sqrt{2(d+1)}n^{-\frac{1}{d+1}}(\log n)^{\frac{1}{d+1}}$, we have
\begin{align*}
    \max\{\mathbb{E}\mathcal{E}_3,\mathbb{E}\mathcal{E}_4\} =O\left(\sqrt{d} n^{-\frac{1}{d+1}}(\log n)^{\frac{1}{d+1}}\right).
\end{align*}
At last, we consider all four error terms simultaneously.
 \begin{align*}
 \mathbb{E}d_{\mathcal{F}^1}(\hbnu,\hbmu)
    & \le \mathcal{E}_1+ \mathcal{E}_2 + \mathbb{E}\mathcal{E}_4+\mathbb{E}\mathcal{E}_3\\
    &= O(\sqrt{d} n^{-\frac{1}{d+1}}+0+\sqrt{d} n^{-\frac{1}{d+1}}(\log n)^{\frac{1}{d+1}})\\
    &= O(\sqrt{d} n^{-\frac{1}{d+1}}(\log n)^{\frac{1}{d+1}}).
\end{align*}

\end{proof}

\section{Proof of Theorem \ref{cor2}}

Following the same proof as Theorem \ref{lma6}, we have the following theorem.
\begin{theorem}
Suppose $\nu$ supported on $\mathbb{R}^k$ and $\mu$ supported on $\mathbb{R}^d$ are both absolutely continuous w.r.t. Lebesgue measure, and $z_i's$ and $x_i's$ are i.i.d. samples from $\nu$ and $\mu$, respectively for $1\leq i\leq n$. Then there exist generator and encoder neural network functions $g: \mathbb{R}^k\mapsto\mathbb{R}^d$ and $e: \mathbb{R}^d\mapsto\mathbb{R}^k$ such that $g$ and $e$ are inverse bijections of each other between $\{z_i: 1\le i\le n\}$ and $\{x_i: 1\le i\le n\}$. Moreover, such neural network functions $g$ and $e$ can be obtained by properly specifying $W_2^2 L_2= c_2 d n $ and $W_3^2 L_3= c_3 k n$ for some constant $12\le c_2, c_3\le 384$.
\end{theorem}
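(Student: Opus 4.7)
My plan is to reduce the $k$-dimensional statement to Theorem~\ref{lma6} by precomposing the one-dimensional constructions with coordinate projections. Since $\nu$ on $\mathbb{R}^k$ is absolutely continuous with respect to Lebesgue measure, the first-coordinate values $z_{1,1},\ldots,z_{n,1}$ of the latent samples are almost surely pairwise distinct, and similarly the first-coordinate values $x_{1,1},\ldots,x_{n,1}$ of the data samples are almost surely distinct. These two scalar orderings take the place of the single scalar ordering that drove the interpolation in the proof of Theorem~\ref{lma6}.

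Concretely, I would first reorder the indices so that $z_{1,1}<z_{2,1}<\cdots<z_{n,1}$ and $x_{1,1}<x_{2,1}<\cdots<x_{n,1}$; this common sorted indexing will make $g$ and $e$ inverse bijections between the two sample sets automatically. For the generator $g:\mathbb{R}^k\to\mathbb{R}^d$, I apply the construction in Theorem~\ref{lma6} verbatim to the scalar knots $z_{1,1},\ldots,z_{n,1}$ to obtain a continuous piecewise linear map $\tilde g:\mathbb{R}\to\mathbb{R}^d$ that sends $z_{i,1}\mapsto x_i$, and then set $g(z):=\tilde g(\pi_1 z)$, where $\pi_1(z):=z_1$ is the projection of $\mathbb{R}^k$ onto its first coordinate. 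The projection $\pi_1$ is affine and absorbs into the first weight matrix of the ReLU network realizing $\tilde g$ by placing zeros on the $k-1$ irrelevant input channels, so no extra neurons or layers are needed. Applying \citet[Lemma 3.1]{yang2021capacity} exactly as in the proof of Theorem~\ref{lma6} then gives $g\in\mathcal{NN}(W_2,L_2)$ with $W_2^2L_2=c_2 d n$ for some $12\le c_2\le 384$.

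The encoder is symmetric: a piecewise linear map $\tilde e:\mathbb{R}\to\mathbb{R}^k$ interpolating $x_{i,1}\mapsto z_i$ is constructed as in Theorem~\ref{lma6}, but now with a $k$-dimensional range; setting $e(x):=\tilde e(\pi_1 x)$, where $\pi_1$ here is the projection of $\mathbb{R}^d$ onto its first coordinate, produces $e\in\mathcal{NN}(W_3,L_3)$ with $W_3^2L_3=c_3 k n$ for some $12\le c_3\le 384$. Because both constructions use the same sorted matching of indices, the restrictions $g|_{\{z_i\}}$ and $e|_{\{x_i\}}$ are inverse bijections by construction.

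The main (mild) obstacle is verifying that enlarging the input dimension costs nothing in the width--depth budget. For ReLU feedforward architectures this is immediate, since the projection onto the first coordinate is purely linear and merges into the first affine layer. The scaling constants $c_2$ and $c_3$ therefore inherit the exact range $[12,384]$ from Theorem~\ref{lma6}, with the output dimensions $d$ and $k$ (not the input dimensions $k$ and $d$) governing the dependence on dimension, as claimed.
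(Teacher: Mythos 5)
Your proposal is correct and uses essentially the same argument as the paper: both reduce the $k$-dimensional case to the one-dimensional Theorem~\ref{lma6} by exploiting that absolute continuity of $\nu$ on $\mathbb{R}^k$ implies absolute continuity (hence a.s.\ distinctness) of the first coordinates of the samples, so the one-dimensional piecewise-linear interpolation can be applied to the first coordinate and the projection is absorbed into the first affine layer at no cost in width or depth. You spell out the absorption of $\pi_1$ into the first weight matrix and the fact that the $W^2L$ budget scales with the output dimension, which the paper leaves implicit, but the route is the same.
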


Since $\mu$ and $\nu$ are absolutely continuous by assumption, they are also absolutely continuous in any one dimension. Hence the proof reduces to the one-dimensional case.

\section{Additional Lemma}

 Denote $\mathcal{S}^d (z_0,\ldots,z_{N+1})$ as the set of all continuous piecewise linear functions $f:\Rbb\mapsto \Rbb^d $ which have breakpoints only at $z_0 < z_1 < \cdots< z_N < z_{N+1}$ and are constant on $(-\infty,z_0)$ and $(z_{N+1},\infty)$. The following lemma is a result in \cite{yang2021capacity}.

\begin{lemma}\label{appdis}
	Suppose that $W\ge 7d+1$, $L\ge2$ and $N\le (W-d-1)\floor*{\frac{W-d-1}{6d}}\floor*{\frac{L}{2}}$. Then for any $z_0<z_1<\cdots<z_N<z_{N+1}$, $\mathcal{S}^d (z_0,\ldots,z_{N+1})$ can be represented by a ReLU FNNs with width and depth no larger than $W$ and $L$, respectively.
\end{lemma}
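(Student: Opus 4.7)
The plan is to prove this by an explicit construction following the hat-function decomposition strategy of Daubechies et al.\ (2019), adapted to vector-valued outputs. First, I would reduce to the subspace $\mathcal{S}_0^d(z_0,\ldots,z_{N+1})$ of functions vanishing outside $[z_0,z_{N+1}]$: given $f\in \mathcal{S}^d(z_0,\ldots,z_{N+1})$, after an affine rescaling so that $z_0=0$ and $z_{N+1}=1$, I subtract the simple affine interpolant $g_0(z)=f(0)+(f(1)-f(0))\bigl(z_+ - (z-1)_+\bigr)$, which lies in a width-$(d+1)$ depth-$2$ ReLU subnetwork. It then suffices to realize $f-g_0 \in \mathcal{S}_0^d$ in the remaining width/depth budget. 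If $N$ is strictly smaller than $(W-d-1)\floor*{\tfrac{W-d-1}{6d}}\floor*{\tfrac{L}{2}}$ I would pad with artificial breakpoints that do not alter the piecewise-linear shape, so I may assume equality below.

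Second, I would establish the depth-$2$ building block: with $q:=\floor*{(W-d-1)/(6d)}$ and $W':=W-d-1$, any element of $\mathcal{S}_0^d(y_0,\ldots,y_{qW'+1})$ can be represented by a ReLU network of width $W$ and depth $2$. The first hidden layer computes the $W'$ functions $(z-y_{jq})_+$, $1\le j\le W'$, shared across all coordinates. For each output coordinate $f_i=\sum_{n=1}^{qW'} c_{i,n}h_n$, where $\{h_n\}$ is the standard hat-function basis indexed by principal breakpoints, I partition the indices into at most $6q$ sign-coherent subsets $\Lambda^i_k$ whose principal breakpoints satisfy the separation-by-$3$ property; within each $\Lambda^i_k$ the supports of $\{h_n\}$ are disjoint, so $\sum_{n\in\Lambda^i_k}c_{i,n}h_n=\pm[g_{i,k}(z)]_+$ for an explicit linear combination $g_{i,k}$ of the first-layer outputs. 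Stacking the $d$ coordinates in parallel uses $6qd\le W'$ neurons in the second hidden layer, then summing with $\pm 1$ weights produces $f$. This is where the combinatorial bound $W\ge 7d+1$ and the floor factor $\floor*{(W-d-1)/(6d)}$ appear naturally.

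Third, for the full depth, I would decompose $f-g_0=\sum_{l=1}^{\floor*{L/2}} g_l$, where $g_l\in\mathcal{S}_0^d(z_{(l-1)qW'},\ldots,z_{lqW'+1})$ is the CPwL interpolant that agrees with $f-g_0$ on the $l$-th consecutive block of $qW'$ interior breakpoints and vanishes at all others. By the building block, each $g_l$ can be realized by a depth-$2$ width-$W'$ module. I would concatenate these $\floor*{L/2}$ modules in depth, reserving one top neuron in every layer to carry $z_+$ forward (to recompute $(z-y_{jq})_+$ at the start of each module), and $d$ bottom channels to accumulate the running sum $g_0+\sum_{l'\le l}g_{l'}$. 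These $d$ accumulator channels appear ReLU-free, but by choosing a sufficiently large constant shift $C_l$ with $S_l(z)+C_l\ge 0$ on the compact active range, I replace each by $\sigma(S_l(z)+C_l)-C_l$, turning the whole construction into a genuine ReLU network of width $\le W$ and depth $\le L$.

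The main obstacle is the width accounting in the depth-$2$ block: showing that the separation-by-$3$ grouping really yields at most $6q$ groups per coordinate and that the parallelization across the $d$ coordinates, together with the input-copy and output-accumulator channels, fits exactly into width $W$. This is what forces the specific arithmetic bound $N\le(W-d-1)\floor*{(W-d-1)/(6d)}\floor*{L/2}$. A secondary technical point is the ReLU-free-to-ReLU simulation of the accumulator bus, which must be done uniformly across layers without enlarging the width; here boundedness of each intermediate partial sum on the relevant domain makes the shift trick valid.
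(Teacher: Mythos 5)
Your construction is correct and reproduces essentially the same argument that underlies the cited result (Yang, Li, Wang 2021, following the hat-function decomposition of Daubechies et al.): a depth-$2$ width-$(W-d-1)$ block via the separation-by-$3$ grouping of hat functions, concatenated $\floor*{L/2}$ times with a $z_+$-copy channel and $d$ accumulator channels (later converted to genuine ReLU neurons by a constant shift), giving total width $W$ and depth $2\floor*{L/2}\le L$. The only slip is cosmetic: you describe the depth-$2$ building block as having width $W$, but your own neuron count shows it has width $W'=W-d-1$, with width $W$ only after the $d+1$ auxiliary channels are added in the concatenation step.
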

This result indicates that the expressive capacity of ReLU FNNs for piecewise linear functions. If we choose $N=(W-d-1)\floor*{\frac{W-d-1}{6d}}\floor*{\frac{L}{2}}$, a simple calculation shows $c W^2 L/d\le N\le C W^2 L/d$ with $c=1/384$ and $C=1/12$. This means when the number of breakpoints are moderate compared with the network structure, such piecewise linear functions are expressible by feedforward ReLU networks.



\end{document}